\newtheorem{lemma}{Lemma}\newtheorem{theorem}{Theorem}\newtheorem{remark}{Remark}
\newcommand\E[1]{\mathbb{E}\left[#1\right]}
\newcommand\p[1]{\mathbb{P}\left[#1\right]}
\newcommand\Var[1]{\mathrm{Var}\left[#1\right]}
\newcommand\Tr[1]{\mathrm{Tr}\left[#1\right]}
\newcommand\1[1]{\mathbb{I}_{\left\{#1\right\}}}
\newcommand\sign[1]{\mathrm{sign}(#1)}
\renewcommand{\hat}{\widehat}
\renewcommand{\tilde}{\widetilde}
\begin{document}

\title{Clustering and Inference From Pairwise Comparisons}

\author{
    \IEEEauthorblockN{Rui Wu\IEEEauthorrefmark{1}, Jiaming Xu\IEEEauthorrefmark{1}, R. Srikant\IEEEauthorrefmark{1}, Laurent Massouli\'{e}\IEEEauthorrefmark{2}, Marc Lelarge\IEEEauthorrefmark{2}, Bruce Hajek\IEEEauthorrefmark{1}}\\
    \IEEEauthorblockA{\IEEEauthorrefmark{1}University of Illinois at Urbana-Champaign\\
    }
    \IEEEauthorblockA{\IEEEauthorrefmark{2}INRIA
    }
}

\pagenumbering{arabic}
\maketitle
\begin{abstract}
Given a set of pairwise comparisons, the classical ranking problem computes a single ranking that best represents the preferences of all users. In this paper, we study the problem of inferring individual preferences, arising in the context of making personalized recommendations. In particular, we assume that there are $n$ users of $r$ types; users of the same type provide similar pairwise comparisons for $m$ items according to the Bradley-Terry model.
We propose an efficient algorithm that accurately estimates the individual preferences for almost all users, if there are $r \max \{m, n\}\log m \log^2 n$ pairwise comparisons per type, which is near optimal in sample complexity when $r$ only grows logarithmically with $m$ or $n$.
Our algorithm has three steps:
first, for each user, compute the \emph{net-win} vector
which is a projection of its $\binom{m}{2}$-dimensional vector of pairwise comparisons
onto an $m$-dimensional linear subspace;
second, cluster the users based on the net-win vectors; third, estimate a single preference for each cluster separately.
The net-win vectors are much less noisy than the high dimensional vectors of pairwise comparisons and clustering is more accurate after the projection as confirmed by numerical experiments. Moreover, we show that, when a cluster is only approximately correct, the maximum likelihood estimation for the Bradley-Terry model is still close to the true preference. 

\end{abstract}

\section{Introduction}

The question of ranking items using pairwise comparisons is of interest in many applications. Some typical examples are from sports where pairs of players play against each other and people are interested in ranking the players from past games. This type of ranking problem is usually studied using the Bradley-Terry model \cite{BradleyTerry52} where each item $i$ is associated with a score $\theta_i $ measuring its competitiveness and
\begin{align*}
  \p{\text{item $i$ is preferred over item $j$}} = \frac{e^{\theta_i}}{e^{\theta_i}+e^{\theta_j}}.
\end{align*}
For this model, the maximum likelihood estimation of the score vector $\theta$ can be solved efficiently \cite{Hunter04}.

There are other examples where comparisons are obtained implicitly. For example, when a user clicks a result from a list returned by a search engine for a given request, it implies that this user prefers this result over nearby results on the list. Similarly, when a customer buys a product from an online retailer, it implies that this customer prefers this product over previously browsed products. Businesses providing these services are interested in inferring users' rankings of items. In these examples, users can have different scores for the same item and a single score vector is insufficient to capture individual preferences. Therefore it is more appropriate to view the user preferences as generated from the mixture of Bradley-Terry models. Though this mixture model has been used in many fields (See \cite{AmmarShah14,OhShah14} and the references therein), little is known about how to cluster the users and learn the individual preferences efficiently, and how many pairwise comparions are needed for a target estimation error.

In this work, we study the following mixture Bradley-Terry model:
users are clustered into different types; users
of the same type have the same score vector; every user independently generates a few pairwise comparisons according to the Bradley-Terry model.
Notice that under our model users of the same type will have similar but not necessarily identical pairwise comparisons.
The task is to estimate the score vector for each user. Essentially, we would like to cluster the users using the observed pairwise comparisons and then estimate the score vector for each cluster. However, there are two key challenges. First, for each user, if we stack all the possible
pairwise comparisons as a vector, this comparison vector lies in a high dimensional space and only a  small number of its entries are observed.
Hence, directly clustering users based on the comparison vectors is likely to be too noisy to work well; our numerical experiments (see Section \ref{sec:experiment}) confirm as much. Second, although standard algorithms like maximum likelihood estimation \cite{Hunter04,IterativeRanking} are available for estimating the score vector once the clusters (users of the same type) are exactly found, it is still unclear how the algorithms perform when the clusters are only approximately recovered.


Our first contribution is to propose and show the effectiveness of clustering users
according to their \emph{net-win} vectors.
A net-win vector for a user is a vector of length $m$, where its $i$-th coordinate counts the number of times item $i$ is preferred over other items minus the
number of times other items  are preferred over item $i$ according to this user's pairwise comparisons. The effectiveness of net-win vectors in clustering users
relies on the following surprising fact: the means of all the comparisons vectors are close to some $(m-1)$-dimensional linear subspace; the net-win vectors
are essentially the projection of the comparisons vectors onto this low-dimensional linear subspace. We show the projection to the net-win vectors preserves
the distances between different clusters but the net-win vectors are much less noisy than the comparison vectors.  Given good separations of the net-win vectors corresponding to
different clusters, we show that a standard spectral clustering  algorithm approximately recovers the user clusters.


Our second contribution is to show that, even though the clusters have a few erroneously assigned users, the maximum likelihood estimator for the Bradley-Terry model is still close to the true score vector for this cluster. In our algorithm, as we only expect to approximately recover the user clusters, this robustness result ensures that we can still approximately recover the score vectors for most users.

The results for the clustering and estimation steps can be combined to provide a performance guarantee for the overall algorithm.
 Our algorithm  accurately estimates the score vectors for most users  with only $O\left(r\max\{m, n\}\log m \log^2 n\right)$ pairwise comparisons per cluster, where $r$ is the number of user types (clusters) and $n$ is the total number of users. When there is only one cluster, it is known that $\Omega(m)$ pairwise comparisons are required for any algorithm to accurately estimate the score vector \cite{IterativeRanking}. Also, each user needs to provide at least one pairwise comparison; otherwise there is no hope to accurately estimate the preference for this user, so at least $n$ pairwise comparisons are needed in total to learn individual preferences.
When $r$ is of order $ \log m$ or $\log n$, the sample complexity of our algorithm matches the lower bounds up to logarithmic factors.


\subsection{Related Work}


In this section, we point out some connections of our model and results to prior work. There is a vast literature on the ranking and related
 rating prediction problems; here we cover a fraction of it we see as most relevant.

Rank aggregation has been extensively studied across various disciplines including statistics, psychology, sociology, and computer science
\cite{HodgeRank11,Hirani11,MosselSort08,JordanICML2013,MSRranking}.
The Bradley-Terry (BT) model proposed in \cite{BT55,Luce59} and its various extensions are widely used for studying the rank aggregation problem \cite{Hunter04,Guiver09,Shah12,Souriani13,AzariSoufiani_icml14,Rajkumar14}.
The classical results in \cite{Hunter04} show that the likelihood function under the BT model is concave and
the ML estimator for the score vector can be efficiently found using an EM algorithm. It is further shown in \cite{IterativeRanking} and \cite{HajekOhXu14} that $\Omega(m)$ pairwise comparisons are necessary
for any algorithm to accurately infer the score vector and $O(m \log m)$ randomly chosen pairwise comparisons
is sufficient for the ML estimator. In this paper, we show the ML estimator is able to estimate the score vector
accurately even with a small number of arbitrarily corrupted pairwise comparisons.
In addition to the ML estimation, several Markov chain based iterative methods have been proposed in \cite{Dwork01, IterativeRanking},
and have been shown to accurately estimate
 the score vector with $\Omega(m \log m)$ randomly chosen pairwise comparisons, which matches the sample complexity of the ML
 estimator.

Previous works on rank aggregation, however, mostly focus on a single type of users and aim to combine the observed user preferences to output a single ranking that best represents the preferences of all users. Little is known about clustering and learning individual preferences when there are multiple types of users.  In this paper, we consider a mixed Bradley-Terry model to capture the heterogeneity of the user preferences.
 Our mixed BT model is closely related to the so-called mixed multinomial logit model studied in \cite{AmmarShah14} and \cite{OhShah14}.
 Ammar et al. \cite{AmmarShah14} studies a clustering problem similar to ours under the mixed multinomial logit model, where each user provides a set of $\ell$ favorite items instead of pairwise comparisons, and users are clustered
   based on the overlaps between the sets of $\ell$ favorite items. Under a geometric decay condition on the score vectors, the algorithm is shown to cluster users correctly with high probability if $\ell= \Omega (\log m)$ and there are only $\Omega(\text{poly}(\log m) )$ users.
 However, it is unclear whether the geometric decay condition holds in practice, and more importantly how the algorithm performs when there are a large number of users. Oh and Shah \cite{OhShah14} studies a  different problem of estimating the model parameters under the mixed multinomial logit model. A tensor decomposition based algorithm is shown to  estimate the model parameters accurately with $r^{1.5} m^3 \text{poly}(\log m)$ pairwise comparisons per component. If our algorithm is applied to estimate the model parameters, only $r m \text{poly}(\log m)$ pairwise comparisons per component are needed \footnote{Since there is no need to estimate the preferences for every user in this context, the dependency on $n$ in our sample complexity can be dropped.}.
Another mixture approach is proposed in \cite{Buhmann07} for clustering heterogeneous ranking data and an efficient EM algorithm is derived for parameter estimation. This method can take rankings of different lengths as input. However, no analytical performance guarantee is provided for the clustering.
Very recently, a nuclear norm regularization approach is proposed in \cite{Negahban14} to estimate the score vectors for all users.
By assuming each user has a unique score vector and the score matrix $\Theta^\ast$ formed by stacking all score vectors as rows is approximately of low rank $r$, they prove the estimation error  $\| \widehat{\Theta} -\Theta^\ast \|_F= o(n)$ if there are $r \max\{m,n\} \log \left( \max\{m,n\} \right)$ randomly chosen
pairwise comparisons.
However, it is not immediately clear how the nuclear norm approach performs in terms of the estimation error of the score vector for each individual.

Finally, we point out that there is a large body of work studying the related problem of rating predictions.
A popular approach is based on matrix completion
methods \cite{Candes10,Volinsky09},
the incomplete rating matrix is assumed to be of low rank.
Another line of work \cite{Dabeer12, Sigmetrics14, Massoulie11}
assumes there are multiple types of users and users of the same type provide similar ratings.
However, the rating based methods have several limitations comparing to the pairwise preference based methods.
First,  not all preferences are available in the form of ratings, while
numerical ratings can  be transformed into pairwise comparisons.
Second, ratings are user-biased, e.g.\ a user may give higher or lower ratings on average than others,
while pairwise comparisons are absolute.
Third,  pairwise comparisons are more reliable and consistent than ratings,
e.g.\ it is easier for a user to compare two items than assign scores to
them. Algorithmically, learning preferences from rankings is more challenging,
because the vectors of pairwise comparisons lie in a $\binom{m}{2}$-dimensional space,
while the vectors of ratings lie in an $m$-dimensional space.
We overcome this challenge by a simple, but non-trivial projection of the comparison vectors into a low dimensional, linear subspace.


%

\section{Problem setup}\label{sec:problem}
Consider a system with $r$ user clusters of sizes $K$ and $m$ items and let $n = rK$. Each user $u$ has a score vector for the items $\theta_u = (\theta_{u, 1}, \dots, \theta_{u, m})$, and he/she compares items according to the Bradley-Terry model:  item $i$ is preferred over item $j$ with probability $\frac{e^{\theta_{u, i}}}{e^{\theta_{u, i}}+e^{\theta_{u, j}}}$ and vice versa with probability $\frac{e^{\theta_{u, j}}}{e^{\theta_{u, i}}+e^{\theta_{u, j}}}$. Assume users in the same cluster have the same score vector and denote the common score vector for cluster $k$ by $\theta_k$. As $(\theta_{k, 1}, \dots, \theta_{k, m})$ and $(\theta_{k, 1}+C, \dots, \theta_{k, m}+C)$ for any constant $C$ define the same probability distributions of pairwise comparisons in the Bradley-Terry model, $\theta_k$ is only identifiable up to a constant shift. To eliminate the ambiguity and without loss of generality, we always shift $\theta_k$ to ensure that $\sum_i\theta_{k, i} = 0$.

The overall comparison result is represented by an $n\times\binom{m}{2}$ sample comparison matrix $R$. The $u$-th row $R_u$ is the comparison vector of users $u$. The columns are indexed by two numbers $i, j = 1, \dots, m$ with $i<j$, and the $ij$-th column corresponds to the comparisons for item $i$ and $j$. For each user $u$, and items $i$ and $j$ with $i<j$, user $u$'s comparison result of item $i$ and $j$ is sampled with probability $1-\epsilon$ independently, where $\epsilon$
is the erasure probability. Let $R_{u, ij} = 1$ if $u$ prefers $i$ over $j$, $R_{u, ij} = -1$ if $u$ prefers $j$ over $i$, and $R_{u, ij} = 0$ if $u$'s comparison is not sampled. Then
\begin{align*}
	R_{u, ij} = \begin{cases}
	1 \quad & \text{w.p. $(1-\epsilon)\frac{e^{\theta_{u, i}}}{e^{\theta_{u, i}}+e^{\theta_{u, j}}}$}\\
	0 \quad & \text{w.p. $\epsilon$}\\
	-1 \quad & \text{w.p. $(1-\epsilon)\frac{e^{\theta_{u, j}}}{e^{\theta_{u, i}}+e^{\theta_{u, j}}}$}
	\end{cases}
\end{align*}
Our goal is to estimate the score vectors $\theta_u$ from $R$.

To simplify the analysis, we will assume $\theta_k$'s are generated independently as follows: for each $k$ and $i$, generate $\theta_{k, i}^0$ i.i.d. uniformly in $[0, b]$, and then define $$\theta_{k, i} = \theta_{k, i}^0-\frac{1}{m}\sum_{i = 1}^m\theta_{k, i}.$$ Clearly, $\sum_i \theta_{k, i} = 0$ and $|\theta_{k, i}-\theta_{k, j}|\leq b$ for any $k, i$ and $j$. Notice that $\theta_{k, i}$ are not independent, and $\theta_{k, i}-\theta_{k, j} = \theta_{k, i}^0-\theta_{k, j}^0$.

\subsection{Notation and Outline}

Let $X=\sum_{t=1}^n \sigma_t u_t v_t^\top$ denote the singular value decomposition of a matrix $X \in \mathbb{R}^{n \times n}$ such that $\sigma_1 \ge \cdots \ge \sigma_n$. The spectral norm of $X$ is denoted by $\|X\|$, which is equal to the largest singular value. The best rank $r$ approximation of $X$ is defined as ${P}_r(X)=\sum_{t=1}^r \sigma_t u_t v_t^\top$.
For vectors,
let $\langle x, y \rangle$ denote the inner product between two vectors; the only norm that will be used is the usual $l_2$ norm, denoted as $\|x\|_2$.
In this paper, all vectors are row vectors.
We say an event occurs with high probability when the probability of occurrence of that event goes to one as $m$ and $n$ go to infinity.

The rest of the paper is organized as follows. Section~\ref{sec:main_results} describes our three-step
algorithm and summarizes the main results. The key idea of de-noising the comparison vectors by projection in the first step is explained in Section~\ref{sec:denoising}.
The details of the last two steps of our algorithm, i.e., user clustering and score vector estimation, are provided in Section~\ref{sec:clustering}.
All the proofs can be found in Section~\ref{sec:proof}.  The experimental results are given in Section~\ref{sec:experiment}. Section~\ref{sec:conclusion} concludes the paper. 

\section{Algorithm and Main Result}\label{sec:main_results}
Our algorithm for clustering users and inferring their preferences is presented as Algorithm~\ref{alg:algorithm_2}. The basic idea is to estimate $\theta$ in two steps:  cluster the users and then estimate a score vector for each cluster separately.


The difficulty lies in the clustering step. Recall that, in our problem, each user is represented by a comparison vector of length $\binom{m}{2}$, and
only roughly $(1-\epsilon)\binom{m}{2}$ of its entries are observed. These comparison vectors are so noisy that directly clustering them  result in poor performance, a fact which we confirm in our experiments in Section~\ref{sec:experiment}.

\begin{algorithm}
\caption{Multi-Cluster Projected Ranking}
\label{alg:algorithm_2}
\begin{algorithmic}
\STATE \textbf{Step 0: Sample splitting.} Let $\Omega$ be the support of $R$, i.e., $\Omega = \{(u, ij) | R_{u, ij}\ne 0\}$. We construct two sets $\Omega_1$ and $\Omega_2$ by independently assigning each element of $\Omega$ only to $\Omega_1$ with probability $(1+\epsilon)/4$, only to $\Omega_2$ with probability $(1+\epsilon)/4$ and to both $\Omega_1$ and $\Omega_2$ with probability $(1-\epsilon)/4$. Define $R^{(1)}_{u, ij} = R_{u, ij}\1{(u, ij)\in \Omega_1}$ and $R^{(2)}_{u, ij} = R_{u, ij}\1{(u, ij)\in \Omega_2}$.
\STATE \textbf{Step 1: Denoising.} Let $S = \frac{1}{\sqrt{m}}R^{(1)}A^\top$. The $u$-th row of $S$ is the net-win vector of user $u$.
\STATE \textbf{Step 2: User clustering.} Let $\tilde{S}$ be the rank $r$ approximation of $S$.
Construct the clusters $\hat{\mathcal{C}}_1, \dots, \hat{\mathcal{C}}_r$ sequentially. For $1\leq k\leq r$, after $\hat{\mathcal{C}}_1, \dots, \hat{\mathcal{C}}_{k-1}$ have been selected, choose an initial user $u$ not in the first $k-1$ clusters uniformly at random, and let $\hat{\mathcal{C}}_k = \{u': ||\tilde{S}_u-\tilde{S}_{u'}||_2\leq \tau\}$ where the threshold $\tau$ is specified later. Assign each remaining unclustered user to a cluster arbitrarily.
\STATE \textbf{Step 3: Score vector estimation.} Let $D_{u, ij} = \1{R^{(2)}_{u, ij} = 1}$ and $D_{u, ji} = \1{R^{(2)}_{u, ij} = -1}$ for any $u$ and $i<j$. For users in cluster $\hat{\mathcal{C}}_k$, the estimated score vector is given by $\hat{\theta}_k = \arg\max_\gamma L_k(\gamma)$, where
  $$
     L_k(\gamma) = \sum_{u\in \hat{\mathcal{C}}_k, i, j}D_{u, ij}\log\frac{e^{\gamma_i}}{e^{\gamma_i}+e^{\gamma_j}} \label{eq:defnetwin1}
  $$
\end{algorithmic}
\end{algorithm}

We overcome this difficulty by reducing the dimension of the comparison vectors. Consider user $u$ with comparison vector $R_u$. For each item $i$, define the normalized net number of wins
\begin{align*}
  S_{u, i} = &\frac{1}{\sqrt{m}}\sum_{j\ne i}\left[\1{\text{$u$ prefers $i$ over $j$}}(R)-\1{\text{$u$ prefers $j$ over $i$}}(R)\right]\\
  = & \frac{1}{\sqrt{m}}\left[-\sum_{j<i}R_{u, ji}+\sum_{j>i}R_{u, ij}\right].
\end{align*}
We call $S_u$ the net-win vector of user $u$. To simplify the notation, let $A\in \{\pm 1, 0\}^{m\times \binom{m}{2}}$ be the matrix with the $ij$-th column being $e_i-e_j$, where $e_i$ is the length $m$ vector with all $0$s except for a $1$ in the $i$-th coordinate, then it is easy to verify that
\begin{align}
  S_u = \frac{1}{\sqrt{m}}R_uA^\top. \label{eq:defnetwin1}
\end{align}

The effectiveness of net-win vectors in clustering users relies on the following surprising fact: the expected comparison vectors $\E{R_u}$ for all users,
which are $\binom{m}{2}$-dimensional nonlinear functions of the score vectors $\theta_u$, are close to the $(m-1)$-dimensional linear subspace spanned by the rows of $A$, or the row space of $A$. It suggests denoising the $R_u$'s for all users by projecting them onto the row space of $A$.
The projections of the $R_u$'s turn out to be isometric to the net-win vectors $S_u$'s.
In particular, recall our definition of $S_u$ given in \eqref{eq:defnetwin1}, the term $\frac{1}{\sqrt{m}}A^\top$ acts just like an orthogonal projection onto the row space of $A$.
We show in Section \ref{sec:denoising} that
for any two users $v,w$ in two different clusters, $\|\E{S_{v}}- \E{S_w} \|_2 \approx \| \E{R_v} -\E{R_w} \|_2$ and $\|S_u -\E{S_u}\|_2 \approx \frac{1}{\sqrt{m} } \| R_u -\E{R_u} \|_2$ for $u=v,w$. Therefore, the net-win vectors $S_u$ are much less noisy and easier to separate than the comparison vectors $R_u$.

We then cluster the net-win vectors by a standard spectral clustering algorithm in Step 2 of Algorithm~\ref{alg:algorithm_2}. Let $\{\mathcal{C}_k\}$ denote the true clusters and $\{\hat{\mathcal{C}}_k\}$ denote the clusters generated by Algorithm~\ref{alg:algorithm_2} with threshold $\tau$. Since the clusters are only identifiable up to a permutation of the indices, we define the number of errors in $\{\hat{\mathcal{C}}_k\}$ as
\begin{align*}
  \min_\pi \sum_k|\mathcal{C}_k\ \triangle\  \hat{\mathcal{C}}_{\pi(k)}|,
\end{align*}
where $\triangle$ denotes the symmetric difference of two sets.
The following theorem highlights a key contribution of the paper: the projection of comparison vectors to the row space of $A$
results in significant denoising, which allows for accurately clustering the users with only a small number of pairwise comparisons.
\begin{theorem}\label{result:clustering}
  Let $\tau = \frac{(1-\epsilon)m}{\sqrt{\log m}}$. If $b\in [b_0, 5]$ for any arbitrarily small constant $b_0>0$ or $b\geq C''m^3\log m$ for some constant $C''$, then with high probability, there exists a permutation $\pi$ such that,
  \begin{align*}
    |\mathcal{C}_k\ \triangle\ \hat{\mathcal{C}}_{\pi(k)}|\leq & \frac{512r\max\{m, n\}\log m \log n}{(1-\epsilon)m^2}, \quad \forall k
  \end{align*}
  and
  \begin{align*}
    \sum_k|\mathcal{C}_k\ \triangle\  \hat{\mathcal{C}}_{\pi(k)}|\leq & \frac{1024 r\max\{m, n\}\log m \log n}{(1-\epsilon)m^2}.
  \end{align*}
  In particular, when
  $$ (1-\epsilon)Km^2\geq 512 r\max\{m, n\}\log m \log^2 n, $$
  the fraction of misclustered users in each cluster $k$, i.e., $$
 \frac{ |\mathcal{C}_k\ \triangle\ \hat{\mathcal{C}}_{\pi(k)}|}{K} \le \frac{1}{\log n}.$$
\end{theorem}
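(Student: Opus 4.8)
The plan is to run the standard perturbation analysis of spectral clustering, driven by two quantities: an upper bound on the spectral norm of the noise $S-\E{S}$, and a lower bound on the separation between the expected net-win vectors of distinct clusters. Write $S=\E{S}+(S-\E{S})$. Since all $K$ users in a cluster share one score vector, $\E{S}$ has at most $r$ distinct rows, so $\mathrm{rank}(\E{S})\le r$; let $\E{S}_{(k)}$ denote the common expected net-win vector of cluster $k$. Because $\tilde{S}=P_r(S)$ is the best rank-$r$ approximation of $S$, Weyl's inequality gives $\|\tilde{S}-S\|=\sigma_{r+1}(S)\le\sigma_{r+1}(\E{S})+\|S-\E{S}\|=\|S-\E{S}\|$, whence $\|\tilde{S}-\E{S}\|\le\|\tilde{S}-S\|+\|S-\E{S}\|\le 2\|S-\E{S}\|$. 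As $\tilde{S}-\E{S}$ has rank at most $2r$, its Frobenius norm obeys $\|\tilde{S}-\E{S}\|_F^2\le 2r\,\|\tilde{S}-\E{S}\|^2\le 8r\,\|S-\E{S}\|^2$, which converts a spectral bound on the noise into a bound on the total row-wise deviation $\sum_u\|\tilde{S}_u-\E{S}_u\|_2^2=\|\tilde{S}-\E{S}\|_F^2$.

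The two technical inputs are then: (i) a concentration bound $\|S-\E{S}\|^2\le 16(1-\epsilon)\max\{m,n\}\log n$ with high probability, and (ii) a separation bound $\min_{k\ne l}\|\E{S}_{(k)}-\E{S}_{(l)}\|_2\ge 4\tau$ with high probability over the random scores. For (i) I would write $S-\E{S}=\frac{1}{\sqrt{m}}(R^{(1)}-\E{R^{(1)}})A^\top$; the entries of $R^{(1)}-\E{R^{(1)}}$ are independent, mean-zero, bounded by $1$, of variance $O(1-\epsilon)$, while $\|AA^\top\|=m$. Controlling the bilinear form $x^\top(S-\E{S})y$ over an $\epsilon$-net of unit vectors $x\in\mathbb{R}^n,\ y\in\mathbb{R}^m$ via Bernstein's inequality, using $\|A^\top y\|_2^2=y^\top AA^\top y\le m$ to cancel the $1/\sqrt{m}$, yields the claimed bound, the effective dimension being $\max\{m,n\}$ rather than $\binom{m}{2}$. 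For (ii) I would invoke the near-isometry of Section~\ref{sec:denoising}, $\|\E{S}_{(k)}-\E{S}_{(l)}\|_2\approx\|\E{R}_{(k)}-\E{R}_{(l)}\|_2$, and lower-bound the right-hand side under the generative model for $\theta$: when $b\in[b_0,5]$ the centered map $\theta\mapsto\E{S}$ is essentially linear (since $\tanh(x/2)\asymp x$ on a bounded range), giving $\|\E{S}_{(k)}-\E{S}_{(l)}\|_2\asymp(1-\epsilon)\sqrt{m}\,\|\theta_k-\theta_l\|_2$, and concentration of $\|\theta_k-\theta_l\|_2^2\asymp m$ yields a separation $\asymp(1-\epsilon)m=\omega(\tau)$; when $b\ge C''m^3\log m$ the comparisons saturate and the net-win vectors encode the item orderings, for which the separation is bounded directly. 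I expect step (ii) to be the main obstacle, as it controls a nonlinear function of the random scores and is exactly where the two admissible ranges of $b$ enter.

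With (i)--(ii) in hand, call a user $u$ \emph{bad} if $\|\tilde{S}_u-\E{S}_u\|_2>\tau/2$, and let $N_{\mathrm{bad}}$ be their number. A counting argument gives $N_{\mathrm{bad}}\le\frac{\|\tilde{S}-\E{S}\|_F^2}{(\tau/2)^2}=\frac{4\|\tilde{S}-\E{S}\|_F^2}{\tau^2}\le\frac{32r\,\|S-\E{S}\|^2}{\tau^2}$, and substituting $\tau^2=(1-\epsilon)^2m^2/\log m$ together with the noise bound yields $N_{\mathrm{bad}}\le\frac{512\,r\max\{m,n\}\log m\log n}{(1-\epsilon)m^2}$. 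Every good user lies within $\tau/2$ of its center, so two good users in the same cluster are at distance at most $\tau$ while two good users in different clusters are at distance at least $\delta-\tau\ge 3\tau>\tau$; hence, provided the sequential seeds in Step~2 are good (which holds with high probability since $N_{\mathrm{bad}}$ is small relative to $K$), the threshold-$\tau$ procedure groups good users exactly by their true clusters, and only bad users can be misassigned. Under the matching permutation $\pi$, each symmetric difference $|\mathcal{C}_k\ \triangle\ \hat{\mathcal{C}}_{\pi(k)}|$ is therefore at most $N_{\mathrm{bad}}$, giving the per-cluster bound, while $\sum_k|\mathcal{C}_k\ \triangle\ \hat{\mathcal{C}}_{\pi(k)}|$ counts each bad user at most twice (once as leaving its true cluster, once as joining a wrong one), giving the factor-$1024$ total bound.

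Finally, the ``in particular'' statement is immediate: dividing the per-cluster bound by $K$ gives $\frac{|\mathcal{C}_k\ \triangle\ \hat{\mathcal{C}}_{\pi(k)}|}{K}\le\frac{512\,r\max\{m,n\}\log m\log n}{(1-\epsilon)Km^2}$, which is at most $1/\log n$ exactly when $(1-\epsilon)Km^2\ge 512\,r\max\{m,n\}\log m\log^2 n$.
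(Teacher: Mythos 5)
Your proof follows the same architecture as the paper's: the bound $\|\tilde{S}-\bar{S}\|\le 2\|S-\bar{S}\|$ via optimality of the rank-$r$ approximation, the rank-$2r$ conversion to a Frobenius bound, the counting argument against $(\tau/2)^2$ (this is exactly the paper's Lemma~\ref{result:upper_bound_bad_user}), the two-regime separation bound for the cluster centers (the paper's Lemma~\ref{result:separation_of_S_bar}), and the good/bad-user threshold analysis. The arithmetic in your counting step reproduces the stated constants. However, there is a genuine gap in your concentration step (i), and it sits exactly in the regime the theorem is about.

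You claim $\|S-\E{S}\|^2\le 16(1-\epsilon)\max\{m,n\}\log n$ via an $\epsilon$-net over unit $x\in\mathbb{R}^n$, $y\in\mathbb{R}^m$ and scalar Bernstein. For fixed $(x,y)$ the bilinear form is a sum of independent terms $\frac{1}{\sqrt{m}}x_u(R_{u,ij}-\E{R_{u,ij}})(y_i-y_j)$ with total variance $O(1-\epsilon)$, but the worst-case magnitude of a single term is $\Theta(1/\sqrt{m})$, \emph{independent of} $\epsilon$. Since the net has $e^{\Theta(m+n)}$ points, Bernstein must be invoked at exponent $\Theta(m+n)$, which forces the deviation level
\begin{align*}
s \gtrsim \max\left\{\sqrt{(1-\epsilon)(m+n)},\ \frac{m+n}{\sqrt{m}}\right\},
\end{align*}
and the second (subexponential) term dominates whenever $1-\epsilon = o\left(\max\{m,n\}/m\right)$, i.e.\ (for $m\asymp n$) whenever $1-\epsilon=o(1)$. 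The theorem's ``in particular'' condition allows $1-\epsilon$ as small as $\tilde{\Theta}(r^2/m^2)$ — each user provides only polylogarithmically many comparisons — and in that regime your net argument yields only $\|S-\E{S}\|\lesssim \sqrt{m}$ rather than $\tilde{O}(\sqrt{(1-\epsilon)m})$, making the final bad-user count off by polynomial factors and the conclusion vacuous. The paper avoids this by writing $S-\bar{S}=\sum_u e_u(S_u-\bar{S}_u)$ as a sum of independent random matrices, controlling $\max_u\|S_u-\bar{S}_u\|_2\le 3\sqrt{(1-\epsilon)m\log n/2}$ w.h.p.\ via the vector Bernstein inequality (Lemma~\ref{result:concentration_vector_norm}), and then applying the matrix Bernstein inequality (Lemma~\ref{result:concentration_spectral}); the key point is that the boundedness parameter then also scales with $\sqrt{1-\epsilon}$, at the cost of a $\log^{3/2}n$ rather than $\sqrt{\log n}$ factor. (A net argument can be salvaged in sparse regimes, but only with a Feige--Ofek/Friedman--Kahn--Szemer\'edi-type decomposition into light and heavy pairs, which is a different and substantially harder proof. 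Note also that your intermediate claim with $\sqrt{\log n}$ is \emph{stronger} than what the paper itself proves, so it cannot be dismissed as a routine step.)

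A secondary gap: your endgame conditions on all $r$ sequential seeds being good, asserting this holds w.h.p.\ ``since $N_{\mathrm{bad}}$ is small relative to $K$.'' The first part of the theorem is asserted without any hypothesis guaranteeing $N_{\mathrm{bad}}\ll K$, and even under the ``in particular'' condition a union bound over the $r$ rounds costs roughly $(N_{\mathrm{bad}}/K)\log r$, which need not vanish since the theorem permits $r$ polynomial in $n$. Bad seeds matter: a seed $u$ with $\|\tilde{S}_u-\bar{S}_k\|_2\in(\tau/2,\,3\tau/2]$ captures a strict, non-empty subset of cluster $k$'s good users, splitting the cluster and producing a symmetric difference of order $K$. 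The paper does not argue that seeds are good; it invokes Proposition~1 of the reference cited as [Sigmetrics14], whose analysis of threshold-based sequential clustering handles precisely this case, and your proof would need a substitute for that argument.
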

Theorem \ref{result:clustering} implies that if $m$ and $n$ are on the
same order, roughly each user only needs to give $r^2 \text{poly}(\log m)$
pairwise comparisons to allow for the correct clustering for all but $K/\log n$ users.
Notice that a user needs to give at least one pairwise comparison.
The specific choice of $\tau$ is just for simplicity of the proof, which
can be relaxed to $\tau=C \frac{(1-\epsilon)m}{\sqrt{\log m}}$ for any constant $C>0$.
The lower bound $b \ge b_0$ is required. Consider the extreme case where $b=0$, then
the the score vectors  for all users are
all-zero vectors and the clusters are unidentifiable from pairwise comparisons.
The upper bound $b \le 5$ is an artifact of our analysis as shown by our numerical experiments.
Note that if $b=5$, then the most favorable item is preferred over the least
favorable item with probability approximately $0.993$.

After estimating the clusters, Algorithm~\ref{alg:algorithm_2} treats each cluster separately and estimates a score vector using the maximum likelihood estimation for the single cluster Bradley-Terry model. In order to avoid the dependence between Step $2$ and Step $3$, we generate two smaller samples $R^{(1)}$ and $R^{(2)}$ by subsampling $R$, and use them in the two steps respectively. It is not hard to verify that the support sets $\Omega_1$ and $\Omega_2$ are independent.

The overall performance of Algorithm~\ref{alg:algorithm_2} is characterized by the following theorem,
which shows that, when the number of pairwise comparisons is large enough, the estimations of the score vectors are accurate for most users with high probability.

\begin{theorem}\label{result:main}
Define
\begin{align*}
\eta_1=\frac{r\max\{m, n\}\log m \log n}{ (1-\epsilon)K m^2 }, \quad \eta_2= \sqrt{\frac{\log m}{(1-\epsilon)Km}}
\end{align*}
Assume $b\in[b_0, 5]$ for any arbitrarily small constant $b_0>0$, then   there exists a constant $C>0$ such that with high probability
  \begin{align*}
    \frac{||\hat{\theta}_u-\theta_u||_2}{||\theta_u||_2}\leq
     \frac{C(e^b+1)^2}{be^b}\max \left\{ \eta_1, \eta_2 \right\}
  \end{align*}
except for $512K \eta_1$ users. In particular, if $Km^2(1-\epsilon)>r\max\{m, n\}\log m \log^2 n$,
 then $\frac{||\hat{\theta}_u-\theta_u||_2 }{||\theta_u||_2}=O(\frac{1}{\log n})$ except for $O\left( \frac{K}{\log n} \right)$ users.
\end{theorem}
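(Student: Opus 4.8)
The plan is to combine the clustering guarantee of Theorem~\ref{result:clustering} with a robustness bound for the single-cluster Bradley-Terry maximum likelihood estimator, exploiting the independence of $\Omega_1$ and $\Omega_2$ produced by the sample splitting in Step~0. First I would condition on the high-probability event of Theorem~\ref{result:clustering}, which furnishes a permutation $\pi$ with $|\mathcal{C}_k\ \triangle\ \hat{\mathcal{C}}_{\pi(k)}|\le 512K\eta_1$ for every $k$; relabelling, I may take $\pi$ to be the identity. I then split each estimated cluster $\hat{\mathcal{C}}_k$ into the correctly assigned users $\hat{\mathcal{C}}_k\cap\mathcal{C}_k$ (at least $K-512K\eta_1$ of them, all with $\theta_u=\theta_k$) and the misclustered users $\hat{\mathcal{C}}_k\setminus\mathcal{C}_k$ (at most $512K\eta_1$), the latter forming the exception set of the theorem. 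It therefore suffices to control the single quantity $\|\hat\theta_k-\theta_k\|_2$ for each $k$, since the relative error of every correctly assigned user then follows. The crucial point is that Step~2 depends on $R^{(1)}$ (hence on $\Omega_1$) while $L_k$ in Step~3 uses $R^{(2)}$ (hence $\Omega_2$), and $\Omega_1$ is independent of $\Omega_2$; thus the above partition is independent of the data entering $L_k$, and I may treat $L_k$ as a clean single-cluster likelihood contaminated by an independent, adversarially chosen set of at most $512K\eta_1$ rows.

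The heart of the proof is an absolute bound $\|\hat\theta_k-\theta_k\|_2\le \frac{C'(e^b+1)^2}{e^b}\sqrt{m}\,\max\{\eta_1,\eta_2\}$, which I would obtain by the standard analysis of a strongly concave likelihood, carried out on the subspace $\{\gamma:\sum_i\gamma_i=0\}$ on which $L_k$ is nondegenerate. Two ingredients are needed. (i) A uniform curvature lower bound: each comparison term has Hessian contribution $\tfrac{e^{\gamma_i}e^{\gamma_j}}{(e^{\gamma_i}+e^{\gamma_j})^2}\ge \tfrac{e^b}{(e^b+1)^2}$ whenever $|\gamma_i-\gamma_j|\le b$, and the aggregate $-\nabla^2 L_k$ is a weighted Laplacian of the nearly complete comparison graph on the $\gtrsim K$ clean users, whose spectral gap on the all-ones complement is $\gtrsim (1-\epsilon)Km$; hence $\lambda_{\min}(-\nabla^2 L_k)\gtrsim (1-\epsilon)Km\,\tfrac{e^b}{(e^b+1)^2}$, and since the contaminating rows only add nonnegative curvature, corruption cannot destroy strong concavity. (ii) A gradient bound at the truth: I split $\nabla L_k(\theta_k)$ into a mean-zero stochastic part from the clean users, concentrated via a vector/matrix Bernstein inequality to $\lesssim m\sqrt{(1-\epsilon)K\log m}$, plus a deterministic bias from the corrupted rows, bounded in the worst case by $\lesssim (1-\epsilon)m^{3/2}K\eta_1$. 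Dividing the gradient bound by the curvature bound yields the two terms $\frac{(e^b+1)^2}{e^b}\sqrt{m}\,\eta_2$ and $\frac{(e^b+1)^2}{e^b}\sqrt{m}\,\eta_1$, whose maximum is the claimed absolute bound.

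To pass to the relative bound I would show $\|\theta_k\|_2=\Theta(b\sqrt{m})$ with high probability: since $\theta_{k,i}=\theta_{k,i}^{0}-\frac1m\sum_j\theta_{k,j}^{0}$ with $\theta_{k,i}^{0}$ i.i.d.\ uniform on $[0,b]$, the sum of squares concentrates around $m\,\mathrm{Var}(\mathrm{Unif}[0,b])=mb^2/12$. Dividing the absolute bound by $\|\theta_k\|_2\gtrsim b\sqrt{m}$ produces exactly $\frac{C(e^b+1)^2}{be^b}\max\{\eta_1,\eta_2\}$, valid simultaneously for every correctly clustered user; the only uncontrolled users are the misclustered ones, at most $512K\eta_1$, which is the advertised exception count. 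A union bound over the $r$ clusters keeps all events valid with high probability. Finally, substituting the hypothesis $Km^2(1-\epsilon)>r\max\{m,n\}\log m\log^2 n$ forces $\eta_1=O(1/\log n)$ and $\eta_2=O(1/\log n)$, while $512K\eta_1=O(K/\log n)$, giving the stated special case.

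I expect the main obstacle to be ingredient~(ii) of the robustness bound, namely controlling the gradient bias from the adversarially corrupted rows and showing that, after division by the curvature and by $\|\theta_k\|_2$, it contributes only $\eta_1$ rather than overwhelming the estimate. Unlike the clean stochastic part this term cannot be averaged away and must be handled in the worst case, and the delicate point is that the per-user contribution of size $(1-\epsilon)m^{3/2}$ cancels the $\sqrt{m}$ in $\|\theta_k\|_2$ to leave precisely $\eta_1$. A secondary subtlety is establishing the curvature lower bound at the optimizer itself: the inequality $\tfrac{e^b}{(e^b+1)^2}$ requires $|\gamma_i-\gamma_j|\le b$, which holds at the truth but must be propagated to the MLE iterate by a localization argument bounding the spread of $\hat\theta_k$, since the curvature bound degrades if the coordinates of $\hat\theta_k$ range far beyond $b$.
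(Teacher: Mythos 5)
Your proposal is correct and follows essentially the same route as the paper: it conditions on the clustering guarantee of Theorem~\ref{result:clustering}, exploits the independence of $\Omega_1$ and $\Omega_2$ from the sample splitting, and then reconstructs the paper's own proof of Theorem~\ref{result:score_vector_estimation} (strong concavity via the comparison-graph Laplacian lower bound $\Delta^\top L_B \Delta \gtrsim (1-\epsilon)Km\|\Delta\|_2^2$, a gradient split into a Bernstein-concentrated clean part and a worst-case corrupted part bounded by the number of observed comparisons of misassigned users, and the lower bound $\|\theta_k\|_2 \gtrsim b\sqrt{m}$), before verifying the special case numerically. The ``secondary subtlety'' you flag --- propagating $|\gamma_i-\gamma_j|\le b$ to the intermediate Taylor point --- is a real issue, but the paper glosses over it in exactly the same way, so it does not distinguish your argument from theirs.
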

Theorem \ref{result:main} shows that the estimation error depends on the maximum of $\eta_1$ and $\eta_2$: $\eta_1$ characterizes the fraction of
misclustered users in a given cluster as shown by Theorem \ref{result:clustering}; $\eta_2$ characterizes the estimation error of the maximum likelihood estimation assuming the clustering is perfect.
If $r=1$, then there is no clustering error and the estimation error only depends on $\eta_2$ which matches the existing results in \cite{IterativeRanking} with a single type of user. The lower bounds in \cite{IterativeRanking} and \cite{HajekOhXu14} show that at least $\Omega(m)$ pairwise comparisons per type are needed to ensure $\frac{||\hat{\theta}_u-\theta_u||_2}{||\theta_u||_2}=o(1)$ even when clusters are known.
 Also, a user needs to provide at least one pairwise comparison for us to infer his/her preference, which means that at least $\Omega(n)$ pairwise comparisons in total are required to infer the preferences for most users.
 Theorem~\ref{result:main} shows that Algorithm~\ref{alg:algorithm_2} needs approximately $\frac{1}{2}(1-\epsilon)Km^2 = O(r\max\{m, n\}\log m\log^2 n)$ comparisons per cluster, which matches the lower bounds up to logarithmic factors if $r$ is poly-logarithmic in $n$ or $m$.

\section{Denoising using Net-win Vectors}\label{sec:denoising}
In this section, we analyze Step 1 of Algorithm~\ref{alg:algorithm_2}.
We first argue that directly clustering based on the comparison vector is too noisy to work well.
Then, we show the net-win vectors preserve the distances between different clusters from a geometric projection point of view.
Finally, we prove the net-win vectors are much less noisy than the comparison vectors.

Recall that $R_u$ is a $\binom{m}{2}$-dimensional vector of all pairwise comparisons for user $u$. For any $i<j$, the mean of the $ij$-th entry is
  \begin{align*}
    \E{R_{u, ij}}=  (1-\epsilon)\frac{e^{\theta_{u, i}}-e^{\theta_{u, j}}}{e^{\theta_{u, i}}+e^{\theta_{u, j}}}
    \triangleq  (1-\epsilon)f(\theta_{u, i}-\theta_{u, j}),
\end{align*}
where $f(x)= \frac{e^x-1}{e^x+1}$. Since two users from the same cluster
have the identical score vector, the means of their comparison vectors are also identical. With a slight abuse of notation, let $\bar{R}_k$ denote the common
means of the comparison vectors for users in cluster $k$, where $k=1, \ldots, r$. For $k \neq k'$, we call $ \| \bar{R}_k - \bar{R}_k'\|_2 $ the distance between cluster $k$ and $k'$. It is easy to check that $\| \bar{R}_k - \bar{R}_k'\|_2= \Theta\left( (1-\epsilon) m \right)$ with high probability. In other words, the distances between different clusters are roughly $\Theta\left( (1-\epsilon) m \right)$. Hence, if we observe the means of all the comparison vectors,
then clustering becomes trivial. In our problem, for each user $u$, we only observe $R_u$, which is a noisy observation of $\E{R_u}$. More specifically, since
the expected number of comparison a user provides is $(1-\epsilon)\binom{m}{2}$,
\begin{align*}
\E{\| R_u - \E{R_u} \|_2^2 } =\sum_{ i<j} \text{Var} \left[ R_{u,ij} \right] = \Theta\left( (1-\epsilon)m^2 \right).
\end{align*}
Therefore, we would expect the deviation $\| R_u - \E{R_u} \|_2 = \Theta\left( m \sqrt{1-\epsilon} \right)$, which is much larger than the distances
between different clusters given by $\Theta\left( (1-\epsilon) m \right)$. As a result, the comparison vectors for two users from the same cluster are likely to be
far apart, while the comparison vectors for two users from different clusters might happen to be close. Therefore, the comparison vectors are too noisy to be clustered directly.

In the following, we explain how to denoise the comparison vectors.
An interesting observation is that the mean of the comparison vector $\E{R_u}$ lies close to an $(m-1)$-dimensional linear subspace. In particular, using the definition of $A$, we get
$$\E{R_u} = (1-\epsilon)f(\theta_u A),$$
where for a vector $v \in \mathbb{R}^m$, $f(v) \triangleq (f(v_1), \ldots, f(v_m))$. Although $f$ is a non-linear function, we are able to show
the angle $\alpha$ between $\E{R_u}$ and the $(m-1)$-dimensional linear subspace spanned by the rows of $A$, or the row space of $A$, is not large.
To see it, let us first assume $b$ is small. Recall that $|\theta_{k, i}-\theta_{k, j}|\leq b$ for any $k, i$ and $j$. In this regime, we can linearize the function $f$ at $0$ and get
$$
  f(\theta_uA)\approx \frac{1}{2}\theta_u A,
$$
which means that $\E{R_u}$ is approximately on the row space of $A$ and the angle $\alpha\approx 0$.
Somewhat surprisingly, $\alpha$ is still not too large even if $b$ becomes so large
that the linear approximation does not work any more. Consider the extreme case
when $b\to \infty$, under our assumption that $\theta_{u, i}$ are uniformly distributed, we have $|\theta_{u, i}-\theta_{u, j}|\to\infty$, thus
\begin{align*}
  f(\theta_uA)\to \sign{\theta_uA}.
\end{align*}
The following lemma shows that $\alpha$ is approximately $30^{\circ}$ in this case.
\begin{lemma}\label{result:large_b_small_angle}
  For any $\theta_k\in \mathbb{R}^m$ and assume $\theta_{k, i}\ne \theta_{k, j}$ for any $i$ and $j$. Define row vector $\eta \in \{-1, +1\}^{\binom{m}{2}}$ as $\eta_{ij} = \sign{\theta_{k, i}-\theta_{k, j}}$. Then
  the angle between $\eta$ and the row space of $A$ is $\arccos{\sqrt{\frac{2}{3}}}$ in the limit as $m\rightarrow\infty$.
\end{lemma}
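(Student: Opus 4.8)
The plan is to reduce the statement to a ratio of two norms. Writing $\Pi$ for the orthogonal projection onto the row space of $A$, the angle $\alpha$ between $\eta$ and that subspace satisfies $\cos\alpha=\|\Pi\eta\|_2/\|\eta\|_2$, so it suffices to evaluate both quantities and let $m\to\infty$. The denominator is immediate: since $\eta\in\{\pm1\}^{\binom{m}{2}}$, we have $\|\eta\|_2^2=\binom{m}{2}$. Everything therefore comes down to computing $\|\Pi\eta\|_2$.

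For the numerator I would exploit that $A$ is exactly the (signed) incidence matrix of the complete graph $K_m$. A direct entrywise computation gives $AA^\top=mI-J$ (diagonal $m-1$ from the $m-1$ pairs meeting each vertex, off-diagonal $-1$ from the single shared pair), i.e.\ $AA^\top$ is the Laplacian $L$ of $K_m$, with eigenvalue $0$ on $\mathbf{1}$ and $m$ on $\mathbf{1}^\perp$; hence $(AA^\top)^+=\frac1m(I-\frac1m J)$. Using $\|\Pi\eta\|_2^2=\eta A^\top(AA^\top)^+A\eta^\top$, set $w\triangleq\eta A^\top\in\mathbb{R}^m$. Expanding the product shows $w_i=\sum_{j\ne i}\sign{\theta_{k,i}-\theta_{k,j}}$, the net-win count of item $i$, so if $\rho_i$ denotes the rank of item $i$ among the (assumed distinct) scores then $w_i=(m-\rho_i)-(\rho_i-1)=m+1-2\rho_i$. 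In particular $\sum_i w_i=0$, i.e.\ $w\perp\mathbf{1}$, which kills the $J$ term and yields $\|\Pi\eta\|_2^2=\frac1m\|w\|_2^2$.

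It then remains to evaluate $\|w\|_2^2$. Since $\rho$ is a permutation of $\{1,\dots,m\}$, the multiset $\{w_i\}$ equals $\{m+1-2k:k=1,\dots,m\}$, and a routine sum of squares gives $\|w\|_2^2=\sum_{k=1}^m(m+1-2k)^2=\frac{m(m^2-1)}{3}$. Combining, $\cos^2\alpha=\frac{\|\Pi\eta\|_2^2}{\|\eta\|_2^2}=\frac{(m^2-1)/3}{m(m-1)/2}=\frac{2(m+1)}{3m}\to\frac23$, so $\alpha\to\arccos\sqrt{2/3}$ as $m\to\infty$, as claimed.

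I do not expect any single step to be a serious obstacle; the only points requiring care are recognizing $AA^\top$ as the complete-graph Laplacian (so that its pseudoinverse is available in closed form) and checking $w\perp\mathbf{1}$, which is what makes the projection norm collapse cleanly to $\frac1m\|w\|_2^2$. As a sanity check and alternative route, one can avoid the pseudoinverse altogether: solving the normal equations $\theta^\ast AA^\top=w$ on $\mathbf{1}^\perp$ gives $\theta^\ast=\frac1m w$, whence $\|\Pi\eta\|_2^2=\langle\eta,\theta^\ast A\rangle=\langle w,\theta^\ast\rangle=\frac1m\|w\|_2^2$, reproducing the same value.
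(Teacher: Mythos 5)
Your proof is correct and follows essentially the same route as the paper's: both reduce the squared cosine to $\frac{1}{m}\|\eta A^\top\|_2^2 \big/ \|\eta\|_2^2$, recognize $\eta A^\top$ as a permutation of the net-win vector $[-(m-1),-(m-3),\dots,m-3,m-1]$, and evaluate the same sum of squares to get $\cos^2\alpha=\frac{2(m+1)}{3m}\to\frac{2}{3}$. The only difference is bookkeeping: you derive the projection identity from the pseudoinverse of the complete-graph Laplacian $AA^\top=mI-\mathbf{1}\mathbf{1}^\top$ together with the observation $\eta A^\top\perp\mathbf{1}$, whereas the paper invokes its SVD $A=\sqrt{m}UV^\top$ (Lemma~\ref{result:A}) to write $VV^\top=\frac{1}{m}A^\top A$ --- the same underlying fact in different clothing.
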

For the intermediate range of $b$, we do not have an analytical result on the upper bound of the angle $\alpha$.
Through extensive simulation as plotted in Figure~\ref{fig:angle}, we can see that the $\cos \alpha$ averaged over $100$ independent
simulations  decreases monotonically with $b$ and it is always upper bounded by $\arccos{\sqrt{\frac{2}{3}}}$.
\begin{figure}[ht]
\begin{center}
\scalebox{.6}{\includegraphics{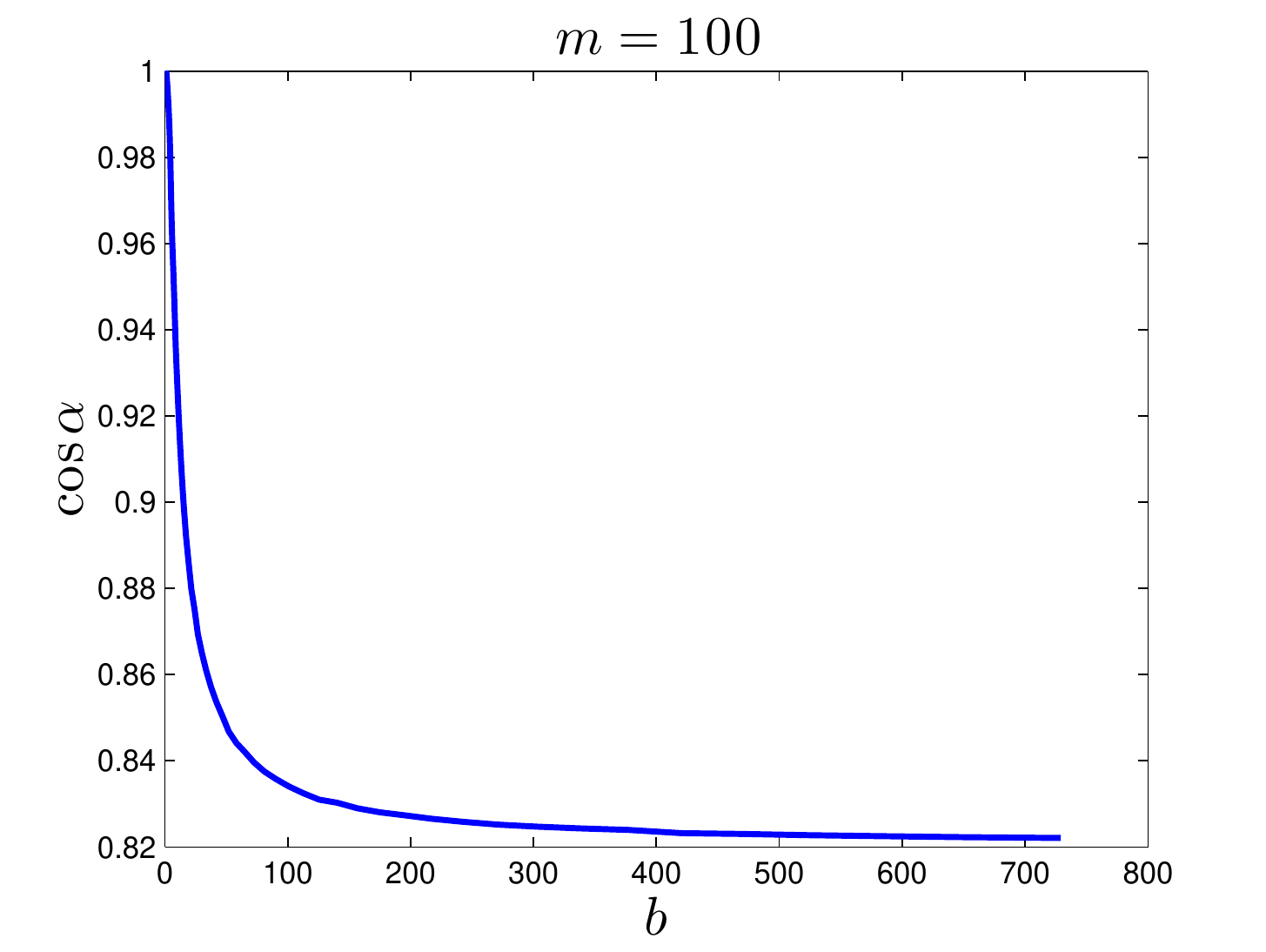}}
\end{center}
\caption{Cosine of the angle between $f(\theta_u A)$ and the row space of $A$ for various $b$.}
\label{fig:angle}
\end{figure}

The observation that $\E{R_u}$ is close to the row space of $A$ suggests that
we may denoise the comparison vectors by projecting $R_u$ onto the row space of $A$.
We show in Lemma~\ref{result:A} that the SVD of $A$ is given by $A = \sqrt{m}UV^\top$, where
$U\in \mathbb{R}^{n\times(m-1)}$ and $V\in \mathbb{R}^{\binom{m}{2}\times (m-1)}$.
Since the row vectors of $V^\top$ form an orthonormal basis of the row space of $A$,
the projection of $R_u$ onto the row space of $A$ is given by $R_uVV^\top$ and
when represented in the basis $V^\top$, the projection is simply $R_uV$.
Interestingly, we find that $R_uVV^\top$ is isometric to the
normalized net-win vectors \footnote{In Algorithm~\ref{alg:algorithm_2},
we generate two independent samples from $R$ and $S_u$ is defined using $R^{(1)}_u$. Here, we
simply write  $R^{(1)}_u$ as $R_u$ for ease of notation. } used in Algorithm~\ref{alg:algorithm_2}:
\begin{align*}
  S_u \triangleq \frac{1}{\sqrt{m}}R_u A^\top = R_uV U^\top.
\end{align*}
Since the rows of $U^\top$ form an orthonormal basis, when represented in the basis $U^\top$, $S_u$
is simply $R_u V$, which is exactly the same as $R_u VV^\top$ when represented in the basis $V^\top$.
Hence, the net-win vectors are equivalent to the projection of comparison vectors into the row space of $A$.
The benefit of using the net-win vectors instead of doing the projection is that they have a
more clear physical meaning and are easier to compute;
there is no need to compute the SVD of $A$, which is prohibitive when $m$ is large.

Since $\E{S_u} = \frac{1}{\sqrt{m}} \E{R_u} A^\top$, two users from the same cluster have the same expected net-win vectors.
With a slight abuse of notation, let $\bar{S}_k$ denote the common expected net-win vectors for users in cluster $k$, where $k=1, \ldots, r$.
The following lemma confirms that after the projection,
$\| \bar{S}_k - \bar{S}_{k'} \|_2 \asymp \| \bar{R}_k -\bar{R}_{k'} \|_2 $ and thus the projection preserves the distances
between different clusters.
\begin{lemma}\label{result:separation_of_S_bar}
	Assume $m\geq C'\log r$ for some constant $C'$. If $b\in [b_0, 5]$ for some constant $0<b_0<5$ and $b\geq C''m^3\log m$, then
	there exists some constant $C$ such that with high probability, for any $k\ne k'$,
	\begin{align*}
		||\bar{S}_k-\bar{S}_{k'}||_2 \geq C(1-\epsilon)m.
	\end{align*}
\end{lemma}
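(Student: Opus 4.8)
The plan is to reduce the statement to an explicit lower bound on a sum of squares. Recall $\bar{R}_{k, ij} = (1-\epsilon)f(\theta_{k,i}-\theta_{k,j})$ and $\bar{S}_k = \frac{1}{\sqrt{m}}\bar{R}_k A^\top$. First I would compute the coordinates of $\bar{S}_k$ in closed form: since $f$ is odd and the $ij$-th column of $A$ is $e_i-e_j$, the $i$-th entry telescopes to
\begin{align*}
  \bar{S}_{k,i} = \frac{1-\epsilon}{\sqrt{m}}\sum_{j\neq i} f(\theta_{k,i}-\theta_{k,j}) \triangleq \frac{1-\epsilon}{\sqrt{m}}\, g_k(i),
\end{align*}
where I write $g_k(i)=\sum_{j\neq i} f(\theta_{k,i}-\theta_{k,j})$ and use $\theta_{k,i}-\theta_{k,j}=\theta^0_{k,i}-\theta^0_{k,j}$. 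Consequently $\|\bar{S}_k-\bar{S}_{k'}\|_2^2 = \frac{(1-\epsilon)^2}{m}\sum_{i=1}^m (g_k(i)-g_{k'}(i))^2$, so the lemma is equivalent to showing $\sum_i (g_k(i)-g_{k'}(i))^2 \geq c\,m^3$ with high probability, uniformly over all pairs $k\neq k'$, for a constant $c=c(b_0)>0$.

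Next I would show that $g_k(i)$ concentrates around an idealized profile. Let $F(x)=\mathbb{E}_{Y}\left[f(x-Y)\right]$ for $Y\sim U[0,b]$. Conditioned on $\theta^0_{k,i}=x$, the summands $f(x-\theta^0_{k,j})$, $j\neq i$, are i.i.d., bounded in $[-1,1]$, with mean $F(x)$, so Hoeffding's inequality gives $g_k(i)=(m-1)F(\theta^0_{k,i})+\xi_k(i)$ with $\max_{k,i}|\xi_k(i)|\leq C\sqrt{m\log(rm)}$ except with probability $(rm)^{-\Omega(1)}$ (union bound over the $rm$ indices). Crucially this bound uses only boundedness of $f$, hence holds for every $b$. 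Writing $a=(m-1)(F(\theta^0_{k,i})-F(\theta^0_{k',i}))$ and $\delta=\xi_k(i)-\xi_{k'}(i)$ and using the elementary inequality $(a+\delta)^2\geq \tfrac12 a^2-\delta^2$,
\begin{align*}
  \sum_i (g_k(i)-g_{k'}(i))^2 \geq \frac{(m-1)^2}{2}\sum_i \big(F(\theta^0_{k,i})-F(\theta^0_{k',i})\big)^2 - \sum_i (\xi_k(i)-\xi_{k'}(i))^2 .
\end{align*}
The fluctuation term is at most $4m\max_{k,i}\xi_k(i)^2 = O(m^2\log(rm))$, which is $o(m^3)$ once $\log(rm)\ll m$; this is exactly where the hypothesis $m\geq C'\log r$ enters.

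It then remains to lower bound the signal $\sum_i (F(\theta^0_{k,i})-F(\theta^0_{k',i}))^2$. Across $i$ these are i.i.d., bounded, with mean $2\sigma_F^2$ where $\sigma_F^2=\Var{F(U)}$; since $F'(x)=\mathbb{E}_{Y}\left[f'(x-Y)\right]>0$, $F$ is strictly increasing and $\sigma_F^2>0$. Hoeffding then gives $\sum_i (\cdots)\geq m\sigma_F^2$ except with probability $e^{-c_1 m}$, and a union bound over the $\binom{r}{2}\leq r^2$ pairs succeeds precisely because $m\geq C'\log r$ makes $r^2 e^{-c_1 m}\to 0$ for $C'$ large. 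Combining the three pieces yields $\sum_i(g_k(i)-g_{k'}(i))^2\geq \tfrac14(m-1)^2 m\,\sigma_F^2\geq c_2 m^3$ for all pairs simultaneously, hence $\|\bar{S}_k-\bar{S}_{k'}\|_2\geq \sqrt{c_2}\,(1-\epsilon)m$.

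The last thing to pin down, and the step I expect to be the main obstacle, is a clean lower bound on $\sigma_F^2$ that is uniform in $b$, which is what splits the hypothesis into two regimes. For $b\in[b_0,5]$ I would use that $\sigma_F^2$ is continuous and strictly positive, hence bounded below by $\min_{b\in[b_0,5]}\sigma_F^2>0$. For $b\geq C''m^3\log m$ the differences $\theta^0_{k,i}-\theta^0_{k,j}$ are large, so $f(\theta_u A)\to\sign{\theta_u A}$ and $F(x)\to \tfrac{2x}{b}-1$ away from the $O(1)$-width boundary layers, giving $\sigma_F^2\to \Var{2U/b-1}=\tfrac13$; note this recovers the $\arccos\sqrt{2/3}$ angle of Lemma~\ref{result:large_b_small_angle}. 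The difficulty is genuinely this uniformity: compactness gives a constant only for bounded $b$, while the explicit asymptotic needs $b$ very large, so the intermediate window is left out as an analysis artifact. The fluctuation-versus-union-bound tension (exponentially many pairs in $m$ controlled only through $m\geq C'\log r$) is the other place where constants must be chosen with care, in particular taking $C'$ large enough that the $O(m^2\log(rm))$ error stays a small fraction of the $\Theta(m^3)$ signal.
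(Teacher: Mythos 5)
Your proposal is correct, but it takes a genuinely different route from the paper's. The paper proves the lemma as two separate lemmas with two separate techniques: for $b\in[b_0,5]$ it linearizes $f$ at zero, writing $\bar{S}_k$ as $\frac{1-\epsilon}{4}\sqrt{m}\,\theta_k UU^\top$ plus an error controlled by $\delta(b)/b$ with $\delta(b)=|f(b)-b/2|$ (this is exactly where the cutoff $b\le 5$ comes from), and then controls $\|\theta_k-\theta_{k'}\|_2$ and $\|\theta_k\|_2$ by Hoeffding/Bernstein; for $b\ge C''m^3\log m$ it replaces $f(\theta_k A)$ by the sign vector $\eta_k$ and bounds the cross term $\eta_k A^\top A\eta_{k'}^\top$ by an Azuma martingale argument over random permutations of $[-(m-1),\dots,m-1]$. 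Both halves lean on the SVD $A=\sqrt{m}UV^\top$. You instead work coordinate-wise with $g_k(i)=\sum_{j\ne i}f(\theta_{k,i}-\theta_{k,j})$, compare it to the population profile $(m-1)F(\theta^0_{k,i})$ with $F(x)=\E{f(x-Y)}$, and reduce the whole lemma to the single scalar fact $\Var{F(U)}>0$; the two hypotheses on $b$ enter only through the lower bound on $\sigma_F^2$. This is more elementary (scalar Hoeffding plus union bounds; no SVD of $A$, no martingale) and in fact stronger than you realize: the ``main obstacle'' you flag — uniformity of $\sigma_F^2$ in $b$ — is not an obstacle under your own framework. The map $b\mapsto\sigma_F^2(b)$ is continuous on $(0,\infty)$ (rescale to $[0,1]$ and use dominated convergence), strictly positive for every $b>0$ (since $F$ is strictly increasing), and tends to $1/3$ as $b\to\infty$; hence $\inf_{b\ge b_0}\sigma_F^2(b)>0$, and your argument proves the separation for \emph{all} $b\ge b_0$ — precisely the strengthening the paper's Remark 1 and Conclusions state as open, and which the paper's two-technique split cannot reach. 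Two small accounting points: after the sample split the relevant mean is $\bar{R}^{(1)}_{k,ij}=\frac{1-\epsilon}{2}f(\theta_{k,i}-\theta_{k,j})$ rather than $(1-\epsilon)f(\theta_{k,i}-\theta_{k,j})$ (an immaterial factor of two), and your constant $C'$ in $m\ge C'\log r$ must be allowed to depend on $b_0$ through $\sigma_F^2$, as it implicitly does in the paper as well.
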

\begin{remark}
The lower bound $b \ge b_0$ is necessary. When $b$ is too small,
$\theta_k$'s all become very close to the all-zero vector and the distance between different clusters
given by $ \| \bar{R}_k -\bar{R}_{k'} \|_2$ is too small to distinguish different clusters.
Even though our theorem requires $b \in [b_0, 5]$ or $b$ is very large,
our experiment shows that the $\bar{S}_k$'s are in fact well separated for any $b\geq b_0$.
Moreover, the proof indicates that Lemma \ref{result:separation_of_S_bar}
applies to general pairwise comparison models as long as
the probability of item $i$ is preferred over item $j$ minus the probability of item $j$ is preferred over item $i$
can be parameterized as $f(\theta_i-\theta_j)$ for some sigmoid function $f$ (In BT model, $f(x)=\frac{e^x-1}{e^x+1}$);
the upper bound $ b \le 5$ changes to $ b \le c$, where $c=\min\{ b: | f(b) - f'(0) b| /b \ge f'(0)/ \sqrt{2} \}$.

\end{remark}

Next, we show the net-win vectors are much less noisy than the comparison vectors. In particular,
let $\bar{S}_u=\E{S_u}$, and then $||{S}_u-\bar{S}_u||_2 \leq 3\sqrt{\frac{1-\epsilon}{2}m\log n}$, which is much smaller than the
deviation $\| R_u - \E{R_u} \|_2 = \Theta\left( m \sqrt{1-\epsilon} \right)$.


\begin{lemma}\label{result:concentration_vector_norm}
	If $(1-\epsilon)m^2>36\log n$, then with high probability,
	$$||{S}_u-\bar{S}_u||_2\leq 3\sqrt{\frac{1-\epsilon}{2}m\log n}, \quad \forall u. $$
\end{lemma}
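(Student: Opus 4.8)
The plan is to write the deviation $S_u-\bar S_u$ as a sum of independent, mean-zero, bounded random vectors and apply a Bernstein-type concentration inequality; the crude bounded-difference (McDiarmid) estimate turns out to be too weak once $\epsilon$ is bounded away from $0$, so controlling the tail through the \emph{variance} rather than the per-coordinate range is essential.

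First I would fix a user $u$ and (with the convention of this section that $R_u$ denotes $R^{(1)}_u$) write $S_u-\bar S_u=\frac{1}{\sqrt m}(R_u-\E{R_u})A^\top=\sum_{i<j}W_{ij}$, where $W_{ij}:=\frac{1}{\sqrt m}(R_{u,ij}-\E{R_{u,ij}})(e_i-e_j)$. Because the comparisons of distinct item pairs are sampled independently, the vectors $\{W_{ij}\}_{i<j}$ are independent and mean-zero. Two parameters then govern the concentration. For the uniform bound, since $|R_{u,ij}-\E{R_{u,ij}}|\le 2$ and $\|e_i-e_j\|_2=\sqrt2$, we get $\|W_{ij}\|_2\le M:=2\sqrt{2/m}$. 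For the total variance, since $R^{(1)}_{u,ij}$ is nonzero with probability $(1-\epsilon)/2$, we have $\Var{R_{u,ij}}\le(1-\epsilon)/2$, hence $\E{\|W_{ij}\|_2^2}=\frac{2}{m}\Var{R_{u,ij}}\le\frac{1-\epsilon}{m}$, and summing over the $\binom m2$ pairs gives $V:=\sum_{i<j}\E{\|W_{ij}\|_2^2}\le\frac{(m-1)(1-\epsilon)}{2}\le\frac{(1-\epsilon)m}{2}$. Note $V$ equals $\E{\|S_u-\bar S_u\|_2^2}$ and falls short of the target bound squared by exactly the factor $9\log n$, so the entire content of the lemma is a high-probability (rather than in-expectation) upgrade.

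Next I would invoke a vector Bernstein inequality for $\sum_{i<j}W_{ij}$: for $t$ in the admissible range $t\le V/M$ one has $\p{\|S_u-\bar S_u\|_2\ge t}\le\exp\!\big(-\tfrac{t^2}{8V}+\tfrac14\big)$. Choosing $t=3\sqrt{\tfrac{1-\epsilon}{2}m\log n}$, the hypothesis $(1-\epsilon)m^2>36\log n$ is precisely what places $t$ inside the range $t\le V/M$ (up to the absolute constants in the inequality), and it gives $\tfrac{t^2}{8V}\ge\tfrac{9(1-\epsilon)m\log n/2}{8\cdot(1-\epsilon)m/2}=\tfrac{9}{8}\log n$. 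Therefore $\p{\|S_u-\bar S_u\|_2\ge t}\le e^{1/4}n^{-9/8}$ for each fixed $u$.

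Finally I would take a union bound over the $n$ users, obtaining $\p{\exists u:\ \|S_u-\bar S_u\|_2\ge t}\le e^{1/4}n^{-1/8}\to0$, which establishes the claim with high probability. The hard part is the concentration step: the naive McDiarmid bound gives a tail $\exp(-2t^2/\sum_{i<j}c_{ij}^2)$ with $\sum_{i<j}c_{ij}^2=\Theta(m)$ coming from the worst-case per-coordinate range, which discards the variance and yields an exponent of order $(1-\epsilon)\log n$; after the union bound this is useless as $\epsilon\to1$. Replacing it by a variance-sensitive Bernstein bound, so the exponent scales like $t^2/V$ with the true total variance $V\asymp(1-\epsilon)m$, is the crux, and checking that the prescribed threshold stays within the inequality's admissible range is exactly where the assumption $(1-\epsilon)m^2>36\log n$ enters.
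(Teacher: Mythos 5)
Your proof is correct and follows essentially the same route as the paper's: both write $S_u-\bar{S}_u$ as a sum over pairs $i<j$ of independent, mean-zero vectors of norm $O(\sqrt{1/m})$ (your $\frac{1}{\sqrt{m}}(e_i-e_j)$ is, up to isometry, the paper's row $V_{ij}$ of $V$), bound the total variance by $\frac{1-\epsilon}{2}m$, invoke a vector Bernstein inequality whose admissible range $t\le V/M$ is exactly where the hypothesis $(1-\epsilon)m^2>36\log n$ enters, and finish with a union bound over the $n$ users. The only discrepancies are constant-level (your uniform bound $M=2\sqrt{2/m}$ versus the paper's $\sqrt{2/m}$, and the particular variant of the vector Bernstein tail), which change the numerical constants but not the argument.
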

Notice that Lemma \ref{result:concentration_vector_norm}
is independent of the pairwise comparison model.
Together with Lemma \ref{result:separation_of_S_bar}, it shows that the
projection of comparison vectors into the row space of $A$ preserves the distances
between different clusters and at the same time dramatically reduces the noise variances.
In particular, if $m (1-\epsilon)=\Omega(\log n)$, then the net-win vectors corresponding
to different clusters are well-separated; K-means or some thresholding-based algorithm is
going to work. In the next section, we will show that spectral clustering based on the
net-win vectors does even better and works if $m(1-\epsilon)=\Omega( r^2 \text{poly}(\log n) )$
when $m$ and $n$ are on the same order.

Finally, we point out that the idea of projection or equivalently the net-win vectors introduced
in this subsection, is not specific to the BT model and is applicable to general pairwise comparison models.

\section{User clustering and score vector estimation}\label{sec:clustering}
In this section, we analyze Step 2 and 3 of Algorithm~\ref{alg:algorithm_2}.
Step 2 clusters the net-win vectors $S_u$ by a variation
of the standard spectral clustering algorithm.
After clustering the users, the algorithm estimates the score vectors for each cluster
 using sample $R^{(2)}$.
Recall that the supports of $R^{(1)}$ and $R^{(2)}$ are independent,
which is important for the analysis to decouple the two steps.

\subsection{User clustering}\label{sec:userclustering}

Step 2 of Algorithm~\ref{alg:algorithm_2} first computes the best rank $r$ approximation $\tilde{S}$ of $S$, and then clusters the rows of $\tilde{S}$ by a simple threshold based clustering algorithm. The reason we consider this threshold based clustering algorithm is that it is easy to analyze. However, in the experiments we see later, the more robust $K$-means algorithm is  used instead.

The use of  $\tilde{S}$ can  be understood from a geometric projection point of view. Let $\bar{S}=\E{S}$. Since the users from the same cluster have
the same expected score vector, the rank of $\bar{S}$ is $r$. In other words, the expected net-win vectors $\E{S_u}$ lie in an $r$-dimensional subspace of $R^m$.
Therefore, similar to the projection idea introduced in Section \ref{sec:denoising}, we may de-noise the net-win vectors $S_u$ by projecting them onto this
$r$-dimensional subspace. However, this $r$-dimensional subspace is determined by $\bar{S}$ which is unobservable. Here the key idea is that
$S$ is a perturbation of $\bar{S}$ and thus the space spanned by the top $r$ right singular vectors of $S$ is close to the desired $r$-dimensional subspace. Hence, we can de-noise the net-win vectors $S_u$  by projecting them onto the space spanned by the top $r$ right singular vectors of $S$,
which are exactly $\tilde{S}_u$. The following lemma shows that such a projection is effective in de-noising. In particular, it shows
$\| \tilde{S}- \bar{S} \|^2_F =O \left( (1-\epsilon) r \max\{m, n\} \log^3 n  \right),$
which is much smaller than the deviation bound $\| S-\bar{S} \|^2_F = O \left( (1-\epsilon) m n \log n \right)$ as shown by Lemma \ref{result:concentration_vector_norm}.

\begin{lemma}\label{result:concentration_spectral}
	If $(1-\epsilon)m^2>36\log n$, then with high probability,
\begin{align*}
||{S}-\bar{S}|| & \leq 8\sqrt{(1-\epsilon)\max\{m, n\}}\log^{3/2}n. \\
||\tilde{S}-\bar{S}||_F & \leq  16 \sqrt{2(1-\epsilon)r\max\{m, n\} }\log^{3/2}n.
\end{align*}
\end{lemma}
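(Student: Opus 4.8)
The plan is to reduce both statements to a single spectral-norm estimate on the centered matrix $S-\bar S$, and then to deduce the Frobenius bound for $\tilde S-\bar S$ from it by a standard best-rank-$r$ approximation argument. Write $\bar R=\E{R^{(1)}}$ and $E=R^{(1)}-\bar R$, so that $S-\bar S=\frac{1}{\sqrt m}EA^\top$. The entries $E_{u,ij}$ are independent across all pairs $(u,ij)$, mean zero, bounded by $|E_{u,ij}|\le 2$, and, since each observed comparison lands in $\Omega_1$ with probability $(1-\epsilon)/2$, satisfy $\Var{E_{u,ij}}\le(1-\epsilon)/2$. The whole point is to exploit the projection structure of $A^\top$: although $E$ alone has spectral norm of order $m\sqrt{1-\epsilon}$, right-multiplication by $\frac{1}{\sqrt m}A^\top$ collapses the $\binom m2$ columns onto an $(m-1)$-dimensional subspace with heavy cancellation (this is exactly the isometry underlying Lemma~\ref{result:A}), so $\|S-\bar S\|$ should scale only like $\sqrt{(1-\epsilon)\max\{m,n\}}$ up to logarithmic factors, rather than like the row norm bound of Lemma~\ref{result:concentration_vector_norm}.

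To obtain the first bound I would write $S-\bar S$ as a sum of independent, mean-zero, rank-one matrices,
\begin{align*}
S-\bar S=\frac{1}{\sqrt m}\sum_{u}\sum_{i<j}E_{u,ij}\,e_u(e_i-e_j)^\top ,
\end{align*}
with $e_u\in\mathbb R^{n}$ and $e_i,e_j\in\mathbb R^{m}$, and apply the matrix Bernstein inequality. Each summand has operator norm at most $2\sqrt2/\sqrt m$, giving the uniform bound $L=O(1/\sqrt m)$. For the two matrix-variance statistics I would use $\|e_i-e_j\|_2^2=2$ together with the identity $\sum_{i<j}(e_i-e_j)(e_i-e_j)^\top=AA^\top=mI-\mathbf 1\mathbf 1^\top$, which has spectral norm $m$. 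A direct computation then yields $\big\|\sum\E{ZZ^\top}\big\|\le \tfrac{2}{m}\binom m2\tfrac{1-\epsilon}{2}=O((1-\epsilon)m)$ for the $n\times n$ variance and $\big\|\sum\E{Z^\top Z}\big\|\le \tfrac{(1-\epsilon)n}{2m}\,\|AA^\top\|=O((1-\epsilon)n)$ for the $m\times m$ variance, so the Bernstein variance proxy is $\nu=O((1-\epsilon)\max\{m,n\})$. Under the hypothesis $(1-\epsilon)m^2>36\log n$ the sub-Gaussian term dominates the sub-exponential one, and matrix Bernstein gives $\|S-\bar S\|=O\!\big(\sqrt{(1-\epsilon)\max\{m,n\}\,\log n}\big)$ with high probability, which is comfortably within the stated $8\sqrt{(1-\epsilon)\max\{m,n\}}\log^{3/2}n$.

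Given the spectral bound, the Frobenius statement is immediate. Since users in a common cluster share an expected net-win vector, $\bar S$ has at most $r$ distinct rows and thus rank at most $r$, so $\sigma_{r+1}(\bar S)=0$; by Weyl's inequality $\sigma_{r+1}(S)\le\sigma_{r+1}(\bar S)+\|S-\bar S\|=\|S-\bar S\|$. As $\tilde S$ is the best rank-$r$ approximation, $\|\tilde S-S\|=\sigma_{r+1}(S)\le\|S-\bar S\|$, hence $\|\tilde S-\bar S\|\le\|\tilde S-S\|+\|S-\bar S\|\le 2\|S-\bar S\|$. Because $\mathrm{rank}(\tilde S-\bar S)\le 2r$, we may convert the spectral bound to Frobenius at cost $\sqrt{2r}$, giving $\|\tilde S-\bar S\|_F\le 2\sqrt{2r}\,\|S-\bar S\|$, which reproduces the constant $16$ exactly. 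The main obstacle is the first bound in the genuinely sparse regime where $1-\epsilon$ is small and individual rows/columns of $E$ are irregular; there the high-probability tail must be controlled carefully, and the extra logarithmic slack in the stated $\log^{3/2}n$ most plausibly reflects a truncation or union-bound step (conditioning, for instance, on the uniform row-norm event of Lemma~\ref{result:concentration_vector_norm}) used to tame the heaviest rows before invoking matrix concentration.
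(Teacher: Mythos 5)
Your proposal is correct, and for the spectral-norm bound it takes a genuinely different route from the paper's. The paper also uses matrix Bernstein, but on the \emph{row-wise} decomposition $S-\bar S=\sum_u X_u$ with $X_u=e_u(S_u-\bar S_u)$: the uniform bound on each summand, $\|X_u\|=\|S_u-\bar S_u\|_2\le 3\sqrt{\tfrac{1-\epsilon}{2}m\log n}$, is imported from Lemma~\ref{result:concentration_vector_norm} and therefore holds only with high probability, and the final bound $3\max\{\max_u\|X_u\|\log n,\ \sigma\sqrt{\log n}\}$ is driven by the $\max_u\|X_u\|\log n$ term --- which is exactly where the $\log^{3/2}n$ factor comes from, confirming the guess in your last sentence. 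You instead decompose \emph{entry-wise} into $n\binom{m}{2}$ independent rank-one terms $\tfrac{1}{\sqrt m}E_{u,ij}\,e_u(e_i-e_j)^\top$, and this buys two things: the uniform bound $2\sqrt2/\sqrt m$ is deterministic, so you avoid the truncation/conditioning step that the paper's proof glosses over (matrix Bernstein formally requires an almost-sure bound on the summands, not a w.h.p.\ one); and your variance proxy $O\left((1-\epsilon)\max\{m,n\}\right)$ --- computed correctly via $\|e_i-e_j\|_2^2=2$ and $\|AA^\top\|=\|mI-\mathbf 1\mathbf 1^\top\|=m$ --- dominates under the hypothesis $(1-\epsilon)m^2>36\log n$, giving the sharper rate $O\left(\sqrt{(1-\epsilon)\max\{m,n\}\log n}\right)$, which comfortably implies the stated $8\sqrt{(1-\epsilon)\max\{m,n\}}\log^{3/2}n$. (One phrasing slip: an observed comparison lands in $\Omega_1$ with probability $1/2$; it is the unconditional probability that $R^{(1)}_{u,ij}\neq 0$ that equals $(1-\epsilon)/2$, and your variance bound $\Var{E_{u,ij}}\le(1-\epsilon)/2$ is nonetheless right.) For the Frobenius bound, your argument --- $\mathrm{rank}(\bar S)\le r$, hence $\|\tilde S-S\|=\sigma_{r+1}(S)\le\|S-\bar S\|$ and $\|\tilde S-\bar S\|\le 2\|S-\bar S\|$, then $\mathrm{rank}(\tilde S-\bar S)\le 2r$ to convert to Frobenius at cost $\sqrt{2r}$ --- is exactly the paper's, including the constant $16$; the paper happens to carry out this step inside the proof of Lemma~\ref{result:upper_bound_bad_user} rather than in the proof of Lemma~\ref{result:concentration_spectral} itself.
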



Using a counting argument together with Lemma \ref{result:concentration_spectral}, we can show, for most users $u$,
$\tilde{S}_u$ is close to its expected comparison vector $\bar{S}_u$.
\begin{lemma}\label{result:upper_bound_bad_user}
  Let $\tau = \frac{(1-\epsilon)m}{\sqrt{\log m}}$, then with high probability, there are at most $$\frac{512r\max\{m, n\}\log m \log n}{(1-\epsilon)m^2}$$ users such that $||\tilde{S}_u-\bar{S}_u||_2>\frac{\tau}{2}$.
\end{lemma}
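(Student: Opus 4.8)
The plan is to bound the number of ``bad'' users---those with $\|\tilde{S}_u - \bar{S}_u\|_2 > \tau/2$---by converting the aggregate Frobenius bound from Lemma~\ref{result:concentration_spectral} into a per-user count via a simple averaging (Markov-type) argument. First I would note that the quantity we control globally is $\|\tilde{S} - \bar{S}\|_F^2 = \sum_u \|\tilde{S}_u - \bar{S}_u\|_2^2$, since the Frobenius norm decomposes as a sum over rows. Lemma~\ref{result:concentration_spectral} gives, with high probability,
\begin{align*}
\|\tilde{S} - \bar{S}\|_F^2 \leq \left(16\sqrt{2(1-\epsilon)r\max\{m, n\}}\,\log^{3/2} n\right)^2 = 512\,(1-\epsilon)\,r\max\{m, n\}\,\log^3 n.
\end{align*}

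Next I would observe that each bad user contributes at least $(\tau/2)^2$ to this sum. Hence if $N$ denotes the number of bad users, then $N \cdot (\tau/2)^2 \leq \|\tilde{S} - \bar{S}\|_F^2$, so
\begin{align*}
N \leq \frac{\|\tilde{S} - \bar{S}\|_F^2}{(\tau/2)^2} = \frac{4\|\tilde{S} - \bar{S}\|_F^2}{\tau^2}.
\end{align*}
Substituting the chosen threshold $\tau = \frac{(1-\epsilon)m}{\sqrt{\log m}}$, we get $\tau^2 = \frac{(1-\epsilon)^2 m^2}{\log m}$, and plugging in the Frobenius bound yields
\begin{align*}
N \leq \frac{4 \cdot 512\,(1-\epsilon)\,r\max\{m, n\}\,\log^3 n \cdot \log m}{(1-\epsilon)^2 m^2}.
\end{align*}
A routine simplification then reduces this to the claimed bound $\frac{512\,r\max\{m, n\}\,\log m\,\log n}{(1-\epsilon)m^2}$.

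The arithmetic as written does not immediately collapse to the stated constant and logarithmic powers, so the step I expect to require the most care is reconciling the exponents: the naive substitution produces a factor $\log^3 n \cdot \log m$ against a denominator $(1-\epsilon)$, whereas the target has only $\log m \cdot \log n$. I suspect the intended reading is that Lemma~\ref{result:concentration_spectral} is applied in a regime where some of the logarithmic factors are absorbed, or that a sharper counting is used in which the ``bad'' set is defined relative to the right singular subspace rather than all rows simultaneously---so that the effective dimension contributing is $O(r)$ and the polylogarithmic overhead is genuinely $\log m\log n$. I would therefore double-check whether the high-probability event in Lemma~\ref{result:concentration_spectral} can be invoked with a tightened $\log$ exponent under the standing assumption $(1-\epsilon)m^2 > 36\log n$, and track constants carefully; everything else is an elementary Markov inequality applied row-wise to the Frobenius norm.
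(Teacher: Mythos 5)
Your proposal follows exactly the paper's own proof: invoke Lemma~\ref{result:concentration_spectral} to get $\|\tilde{S}-\bar{S}\|_F^2 \le 512(1-\epsilon)r\max\{m,n\}\log^3 n$ with high probability, then count bad users row-wise via $N\,(\tau/2)^2 \le \|\tilde{S}-\bar{S}\|_F^2 = \sum_u \|\tilde{S}_u - \bar{S}_u\|_2^2$. The arithmetic mismatch you flag at the end is not a defect of your argument but of the paper's: the paper performs the identical substitution silently, and plugging in $\tau = (1-\epsilon)m/\sqrt{\log m}$ gives
\begin{align*}
N \;\le\; \frac{512\,(1-\epsilon)\,r\max\{m,n\}\,\log^3 n \cdot 4\log m}{(1-\epsilon)^2 m^2} \;=\; \frac{2048\, r\max\{m,n\}\,\log m\,\log^3 n}{(1-\epsilon)m^2},
\end{align*}
which is a factor $4\log^2 n$ larger than the bound stated in the lemma. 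There is no sharper counting hidden in the paper---your speculation about counting relative to the right singular subspace, or invoking Lemma~\ref{result:concentration_spectral} with a tightened log exponent, does not correspond to anything in the actual proof; the stated bound in Lemma~\ref{result:upper_bound_bad_user} (and the matching expressions in Theorem~\ref{result:clustering}) simply carries inconsistent polylogarithmic bookkeeping relative to the proof that backs it. So you should trust your derivation: the honest conclusion of both your argument and the paper's is the $2048\log m \log^3 n$ bound. This weaker bound still supports the qualitative conclusions (the fraction of misclustered users is $o(1)$ under a correspondingly stronger, but still polylogarithmic, condition on the number of comparisons); only the displayed constant and the powers of $\log n$ would need to be corrected throughout.
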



Combined with the fact that the $\bar{S}_k$'s are well separated as shown in Lemma~\ref{result:separation_of_S_bar}, we get Theorem~\ref{result:clustering}.

\subsection{Score vector estimation}\label{sec:score_estimation}

In Step 3, Algorithm~\ref{alg:algorithm_2}  estimates the score vectors for each cluster separately
When there is no clustering error, the problem reduces to the inference problem for the classical Bradley-Terry model. In particular, if we let $W_{ij}$ be the number of times item $i$ is preferred over item $j$, then the ranking problem can be solved by the maximum likelihood estimation
\begin{align*}
  \hat{\theta} = \arg\max_\gamma \sum_{ij}W_{ij}\log \frac{e^{\gamma_i}}{e^{\gamma_i}+e^{\gamma_j}}.
\end{align*}
The above optimization is convex and can be solved efficiently \cite{Hunter04}. Further, the recent work \cite{IterativeRanking} provides an error bound for $\hat{\theta}$ when the pairs of items are chosen uniformly and independently.

In general, the clustering step is not perfect, but if there are sufficiently many pairwise comparisons,
Theorem \ref{result:clustering} shows that the clusters can be approximately recovered with high probability.
In this case, Algorithm~\ref{alg:algorithm_2} simply views the users in each cluster as from the same true cluster,
and again solves the optimization problem corresponding to the maximum likelihood estimation for the Bradley-Terry model for each cluster.

Take one such cluster $\hat{\mathcal{C}}$ as an example.
Recall that  $|\mathcal{C}\Delta \hat{\mathcal{C}}|$ denote the set difference between
the true cluster $\mathcal{C}$ and the estimated cluster $\hat{\mathcal{C}}$. It follows that at most
$|\mathcal{C}\Delta \hat{\mathcal{C}}|$ users in
$\hat{\mathcal{C}}$ are from other clusters and at most
$|\mathcal{C}\Delta \hat{\mathcal{C}}| $ users in $\mathcal{C}$ are assigned to wrong clusters.
To simplify the notation, we omit the subscript and use $\theta$ to denote the true score vector for the cluster $\mathcal{C}$ throughout this section.
Let $\hat{\theta}$ be the estimated score vector for cluster $\hat{\mathcal{C}}$.
The following theorem shows that when the number of comparisons is large enough,
the relative error $\frac{||\hat{\theta}-\theta||_2}{||\theta||_2}$ goes to zero when $|\mathcal{C}\Delta \hat{\mathcal{C}}|/K \to 0$.
We should emphasize that $\hat{\theta}$ is only a good approximation for the score vectors of the users from cluster $\mathcal{C}$.
\begin{theorem}\label{result:score_vector_estimation}
 Let $\hat{\mathcal{C}}$ denote an estimator of a fixed cluster $\mathcal{C}$.
  Then there exists some constant $C$ such that with high probability
  \begin{align*}
    \frac{||\hat{\theta}-\theta||_2}{||\theta||_2}\leq \frac{C(e^b+1)^2}{be^b}\max \left\{\sqrt{\frac{\log m}{(1-\epsilon)Km}}, \frac{|\mathcal{C}\Delta \hat{\mathcal{C}}|}{K} \right\}.
  \end{align*}
\end{theorem}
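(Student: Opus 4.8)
The plan is to treat $\hat{\mathcal{C}}$ as a fixed index set. This is legitimate because $\hat{\mathcal{C}}$ is measurable with respect to $R^{(1)}$, whose support is independent of that of $R^{(2)}$, so conditionally the comparisons fed to the maximum likelihood estimator in Step 3 are fresh Bradley--Terry draws with effective sampling probability $(1-\epsilon)/2$. Writing $W_{ij}=\sum_{u\in\hat{\mathcal{C}}}D_{u,ij}$ for the total number of observed ``$i$ beats $j$'' outcomes, the concave objective $L$ satisfies $\nabla L(\hat\theta)=0$ on the mean-zero hyperplane. A first-order Taylor expansion of the gradient with integral remainder gives $\bar H\,(\hat\theta-\theta)=-\nabla L(\theta)$, where $\bar H=\int_0^1\nabla^2 L(\theta+t(\hat\theta-\theta))\,dt$, so that
\begin{align*}
\|\hat\theta-\theta\|_2\ \le\ \frac{\|\nabla L(\theta)\|_2}{\lambda_{\min}\!\left(-\bar H\right)},
\end{align*}
the minimum eigenvalue being taken over the mean-zero subspace. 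The task then splits into a strong-concavity lower bound on $\lambda_{\min}(-\bar H)$ and an upper bound on the gradient $\nabla L(\theta)$ at the truth, in the spirit of the single-cluster analysis of \cite{IterativeRanking} but allowing for contamination.

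For the denominator I would use that $-\nabla^2 L(\gamma)$ is a weighted graph Laplacian on the $m$ items with edge weights $(W_{ij}+W_{ji})\,\tfrac{e^{\gamma_i}e^{\gamma_j}}{(e^{\gamma_i}+e^{\gamma_j})^2}$. Since every pair is compared by $\Theta((1-\epsilon)K)$ users (concentration of $W_{ij}+W_{ji}$ about its mean $\tfrac12(1-\epsilon)|\hat{\mathcal{C}}|$), the comparison graph is essentially complete, whose Laplacian has smallest nonzero eigenvalue equal to $m$ times the edge weight; combined with the sigmoid-slope bound $\tfrac{e^{\gamma_i}e^{\gamma_j}}{(e^{\gamma_i}+e^{\gamma_j})^2}\ge \tfrac{e^b}{(e^b+1)^2}$ valid whenever $|\gamma_i-\gamma_j|\le b$, this yields $\lambda_{\min}(-\bar H)\gtrsim (1-\epsilon)Km\cdot\tfrac{e^b}{(e^b+1)^2}$. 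The delicate point is that the slope bound must hold along the \emph{whole} segment between $\theta$ and $\hat\theta$, i.e.\ one must first establish an a priori $\ell_\infty$ bound confining $\hat\theta$ to a box comparable to $[0,b]^m$. This is the standard subtlety in Bradley--Terry maximum likelihood analysis, and I expect it to be the main obstacle; I would handle it by a self-bounding/restricted-region argument showing that large coordinates of $\hat\theta$ strictly increase $-L$.

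For the numerator I would decompose $\nabla L(\theta)=g_{\mathrm{stat}}+g_{\mathrm{cont}}$, where $g_{\mathrm{stat}}$ collects the contribution of the users truly in $\mathcal{C}$ and $g_{\mathrm{cont}}$ that of the at most $|\mathcal{C}\triangle\hat{\mathcal{C}}|$ contaminating users. Because $\theta$ is the population maximizer for the clean users, $\E{g_{\mathrm{stat}}}=0$, and $g_{\mathrm{stat}}$ is a sum of independent, bounded, mean-zero increments; a Bernstein inequality with a union bound over the $m$ coordinates gives $\|g_{\mathrm{stat}}\|_2=O\!\big(m\sqrt{(1-\epsilon)K\log m}\big)$. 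The contamination term is bounded crudely but adequately: each wrong user contributes a gradient vector whose $i$-th entry is dominated by the number of its comparisons touching item $i$, concentrating near $(1-\epsilon)m/2$, so its norm is $O\!\big((1-\epsilon)m^{3/2}\big)$; the triangle inequality then gives $\|g_{\mathrm{cont}}\|_2=O\!\big((1-\epsilon)m^{3/2}\,|\mathcal{C}\triangle\hat{\mathcal{C}}|\big)$.

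Finally I would assemble the pieces using $\|\theta\|_2=\Theta(b\sqrt m)$, which follows from concentration of the sum of squares of $m$ centered near-uniform$[0,b]$ coordinates. Dividing each gradient bound by $\lambda_{\min}(-\bar H)\,\|\theta\|_2$, the statistical term produces $\tfrac{(e^b+1)^2}{be^b}\sqrt{\tfrac{\log m}{(1-\epsilon)Km}}$ and the contamination term produces $\tfrac{(e^b+1)^2}{be^b}\cdot\tfrac{|\mathcal{C}\triangle\hat{\mathcal{C}}|}{K}$, so taking the larger of the two yields the claimed bound. Along the way one uses $|\mathcal{C}\triangle\hat{\mathcal{C}}|\le K/2$, say, so that $|\hat{\mathcal{C}}|\asymp K$ and the Hessian concentration is not spoiled; when this fails the stated bound is vacuous, since the relative error is always $O(1)$.
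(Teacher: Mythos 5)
Your proposal follows essentially the same route as the paper's proof: the same sample-splitting independence argument, the same gradient-over-curvature reduction (the paper uses $L(\hat\theta)\ge L(\theta)$ with a second-order Taylor expansion and Cauchy--Schwarz rather than the stationarity condition $\nabla L(\hat\theta)=0$, but the two are equivalent here), the same decomposition of $\nabla L(\theta)$ into a mean-zero Bernstein term and a contamination term bounded through $\sum_{j,u} B_{u,ij}$, the same Laplacian curvature bound $\Delta^\top L_B \Delta \gtrsim (1-\epsilon)Km\|\Delta\|_2^2$ with the sigmoid-slope factor $e^b/(e^b+1)^2$, and the same normalization $\|\theta\|_2=\Theta(b\sqrt{m})$. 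The $\ell_\infty$ confinement issue you flag as the main obstacle is in fact glossed over by the paper, which simply asserts $|\gamma_i-\gamma_j|\le b$ along the segment between $\theta$ and $\hat\theta$ without justification, so your sketch is, if anything, more careful on that point.
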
 
Theorem \ref{result:score_vector_estimation} extends the previous results in \cite{IterativeRanking} to the setting with clustering errors. 
Notice that the error bound scales exponentially with $b$. This is likely to be an artifact of our analysis and also appears in previous results in \cite{IterativeRanking}.

\section{Proofs}\label{sec:proof}
In this section, we present the proofs for the main theorems first and then the lemmas.
The proof of Theorem~\ref{result:clustering} uses Lemma \ref{result:separation_of_S_bar} and Lemma~\ref{result:upper_bound_bad_user}.
We prove Theorem~\ref{result:main} by combining Theorem~\ref{result:clustering} and Theorem~\ref{result:score_vector_estimation}.

We first introduce some additional notation used in the proofs. Let $I$ denote the identity matrix. Let $\mathbf{1}$ denote the vector with all-one entries and $\mathbf{1} \mathbf{1}^\top$ denote the
matrix with all-one entries.
For a $m \times n$  matrix $X$ and a $m \times 1$ vector $v$, let $[X, v]$ denote the $m \times (n+1)$ matrix formed by adding $v$ as a column to the end of $X$. For two $n \times n$ matrices $X, Y$, we write $X \le Y$ if $Y-X$ is positive semi-definite.

\subsection{Proof of Theorem~\ref{result:clustering}}
  Recall that $\tau=\frac{(1-\epsilon)m}{\sqrt{\log m}}$.
   We say a user is a good user if $||\tilde{S}_u-\bar{S}_u||_2\leq \frac{\tau}{2}$.
   Under the assumption of Theorem~\ref{result:clustering}, the condition of Lemma \ref{result:separation_of_S_bar} holds.
   Then in view of  Lemma \ref{result:separation_of_S_bar}, for all good users $u$,
  \begin{align*}
    ||{S}_u-\bar{S}_u||_2\leq \frac{\tau}{2}<\frac{1}{4}||\bar{S}_k-\bar{S}_{k'}||_2.
  \end{align*}
  Let $\mathcal{I}$ be the set of good users and Lemma~\ref{result:upper_bound_bad_user} shows that the number of bad users $$|\mathcal{I}^c|\leq \frac{512r\max\{m, n\}\log m \log n}{(1-\epsilon)m^2}.$$ Following the proof of Proposition $1$ in \cite{Sigmetrics14}, we can conclude that there exists a permutation $\pi$ such that,
   $ |\mathcal{C}_k\ \triangle\ \hat{\mathcal{C}}_{\pi(k)}|\leq  |\mathcal{I}^c| $ for all $k$
  and
    $\sum_k|\mathcal{C}_k\ \triangle\  \hat{\mathcal{C}}_{\pi(k)}|\leq  2|\mathcal{I}^c|.$

 \subsection{Proof of Theorem~\ref{result:main}}
From Theorem~\ref{result:clustering}, we get that there exists a permutation $\pi$ such that,
\begin{align*}
    |\mathcal{C}_k\ \triangle\ \hat{\mathcal{C}}_{\pi(k)}|\leq & \frac{512r\max\{m, n\}\log m \log n}{(1-\epsilon)m^2}, \quad \forall k.
  \end{align*}
We then apply Theorem~\ref{result:score_vector_estimation}, and get the result of Theorem~\ref{result:main}.
If we want to achieve $\frac{||\hat{\theta}_u-\theta_u||_2 }{||\theta_u||_2}=O(\frac{1}{\log n})$ for the good users, we need
\begin{align*}
  \frac{r\max\{m, n\}\log m \log n}{(1-\epsilon)K m^2}\leq &\frac{1}{\log n}\\
  \sqrt{\frac{\log m}{(1-\epsilon)Km}} \leq &\frac{1}{\log n},
\end{align*}
which requires $Km^2(1-\epsilon)>r\max\{m, n\}\log m \log^2 n$ and $ Km (1-\epsilon)> m \log m \log^2n$, respectively.
Notice that the former condition is more stringent than the latter one; thus the clustering step needs more pairwise comparisons
than the score vector estimation step to achieve the same error rate.


\subsection{Proof of Theorem~\ref{result:score_vector_estimation}}
Let $p(m,n) \triangleq \frac{|\mathcal{C}\Delta \hat{\mathcal{C}}|}{K}$ denote the set difference between the true cluster and the estimated cluster.
  Recall that $D_{u, ij} = \1{R^{(2)}_{u, ij} = 1}$ and $D_{u, ji} = \1{R^{(2)}_{u, ij} = -1}$ are the random variable indicating $u$'s comparison result of $i<j$. The estimated score vector is given by $\hat{\theta} = \arg\max_\gamma L(\gamma)$, where
  $$
     L(\gamma) = \sum_{u, i, j}D_{u, ij}\log\frac{e^{\gamma_i}}{e^{\gamma_i}+e^{\gamma_j}}.
  $$
  Let $B_{u, ij} = B_{u, ji} = \1{R^{(2)}_{u, ij}\ne 0}$ be the random variables indicating if $u$ compared $i$ and $j$. By definition,
    \begin{align*}
        \frac{\partial L}{\partial \gamma_i} = &\sum_{u, j}(D_{u, ij}-B_{u, ij}\frac{e^{\gamma_i}}{e^{\gamma_i}+e^{\gamma_j}})\\
        \frac{\partial^2 L}{\partial \gamma_i^2} = &-\sum_j B_{ij}\frac{e^{\gamma_i}e^{\gamma_j}}{(e^{\gamma_i}+e^{\gamma_j})^2}\\
        \frac{\partial^2 L}{\partial \gamma_i\partial \gamma_j} = & B_{ij}\frac{e^{\gamma_i}e^{\gamma_j}}{(e^{\gamma_i}+e^{\gamma_j})^2},
    \end{align*}
    where $B_{ij} = \sum_u B_{u, ij}$. Let $\Delta = \hat\theta-\theta$. As $\hat{\theta}$ is the optimal solution,
    \begin{align*}
      0\leq &L(\hat\theta)-L(\theta)\\
      = & \langle \nabla L(\theta), \Delta \rangle +\frac{1}{2}\Delta^\top (\nabla^2 L(\gamma))\Delta,
    \end{align*}
    where the second step is by Taylor expansion and $\gamma = \theta+\lambda\Delta$ for some $\lambda\in [0, 1]$.
    Define $L_B=\text{diag}(B \mathbf{1} ) - B$, where $\text{diag}(v)$ denotes the diagonal matrix formed by vector $v$ and  $L_B$ is
    known as Laplacian. By Cauchy-Schwartz inequality,
    \begin{align*}
      ||\nabla L(\theta)||_2||\Delta||_2 \geq & \frac{1}{2}\Delta^\top(-\nabla^2 L(\gamma)\Delta) \\
      = & \frac{1}{4} \sum_{i,j} \left( \Delta_i - \Delta_j \right)^2 \frac{e^{\gamma_i}e^{\gamma_j} } {(e^{\gamma_i}+e^{\gamma_j})^2} \\
      \geq &\frac{e^b}{2(e^b+1)^2}\Delta^\top L_B\Delta,
    \end{align*}
    where the second inequality follows because $\frac{e^{\gamma_i}e^{\gamma_j}}{(e^{\gamma_i}+e^{\gamma_j})^2} \ge \frac{e^b}{(e^b+1)^2}$ since $|\gamma_i-\gamma_j|\leq b$ for any $i, j$.

    Let $Z_{u, ij} = D_{u, ij}-B_{u, ij}\frac{e^{\gamma_i}}{e^{\gamma_i}+e^{\gamma_j}}$. First we bound $||\nabla L(\theta)||_2$. For each $i$,
    \begin{align*}
      \frac{\partial L}{\partial \theta_i}
      = & \sum_{j, u\in C}Z_{u, ij}-\sum_{j, u\in C\setminus \hat{C}}Z_{u, ij} +\sum_{j, u\in \hat{C}\setminus C}Z_{u, ij}.
    \end{align*}
     The first term is independent of $\hat{\mathcal{C}}$. For $u\in \mathcal{C}$, $\E{Z_{u, ij}} = 0$ and $\Var{Z_{u, ij}}\leq \frac{1-\epsilon}{2}$. By Bernstein's inequality, with high probability for large $m$,
    \begin{align*}
      \bigg|\sum_{j, u\in C}Z_{u, ij} \bigg|\leq & C_1\sqrt{(1-\epsilon)Km\log m}.
    \end{align*}
    We bound the next two terms by $$\bigg|-\sum_{j, u\in C\setminus \hat{C}}Z_{u, ij} +\sum_{j, u\in \hat{C}\setminus C}Z_{u, ij}\bigg|\leq \sum_{j, u\in I^C}B_{u, ij}.$$
    Since the matrix $B$ only depends on $\Omega_2$ but not the comparison results, the right hand side above is independent of $R^{(1)}$ or $\hat{\mathcal{C}}$. As $B_{u, ij}$ are independent Bernoulli random variables with parameter $1-\epsilon$, with high probability for large $m$, $\sum_{j, u\in I^C}B_{u, ij} \leq C_2(1-\epsilon)mKp(m, n)$. Thus,
    \begin{align*}
      \bigg|-\sum_{j, u\in C\setminus \hat{C}}Z_{u, ij} +\sum_{j, u\in \hat{C}\setminus C}Z_{u, ij}\bigg|
      \leq C_2(1-\epsilon)mKp(m, n).
    \end{align*}
    Therefore,
    $$||\nabla L(\theta)||_2\leq C_2(1-\epsilon)Km^{3/2}\max\left\{\sqrt{\frac{\log m}{(1-\epsilon)Km}}, p(m, n)\right\}. $$

    Next we bound $\Delta^\top L_B\Delta$. Again by the fact that $B$ is independent of $R^{(1)}$, we can simply follow the proof of Theorem~4 in \cite{IterativeRanking} and get
    \begin{align*}
      \Delta^\top L_B\Delta\geq \frac{1}{4}(1-\epsilon)Km||\Delta||_2^2,
    \end{align*}
    with high probability for large $m$. Combining the above results, we get the upper bound on $||\Delta||_2$
    \begin{align*}
      ||\Delta||_2\leq \frac{C_3(e^b+1)^2}{e^b}\sqrt{m}\max\left\{\sqrt{\frac{\log m}{(1-\epsilon)Km}}, p(m, n)\right\}.
  \end{align*}

  On the other hand, similar to the proof of Lemma~\ref{result:separation_of_S_bar_small_b}, we can show that $||\theta||_2\geq \frac{\sqrt{mb^2}}{4}$. Therefore,
  \begin{align*}
    \frac{||\hat{\theta}-\theta||_2}{||\theta||_2} = \frac{||\Delta||_2}{||\theta||_2}\leq \frac{C(e^b+1)^2}{be^b}\max\left\{\sqrt{\frac{\log m}{(1-\epsilon)Km}}, p(m, n)\right\}.
  \end{align*}

\subsection{Proof of Lemma~\ref{result:large_b_small_angle}}

We first present a lemma on the properties of $A$ (proved in the Appendix). Recall that $A\in \{\pm 1, 0\}^{m\times \binom{m}{2}}$ is the matrix with the $ij$-th column being $e_i-e_j$, where $e_i \in \{0,1\}^m$ is a vector with all $0$s except for a $1$ in the $i$-th coordinate.
\begin{lemma}\label{result:A}
  The matrix $A$ is of rank $m-1$ with SVD $A = \sqrt{m}UV^\top$, where $U\in \mathbb{R}^{n\times(m-1)}$ and $V\in \mathbb{R}^{\binom{m}{2}\times (m-1)}$. Moreover, the $l_2$-norms of the rows of $U$ and $V$ are $\sqrt{(m-1)/m}$ and $\sqrt{2/m}$, respectively.
\end{lemma}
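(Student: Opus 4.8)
The plan is to diagonalize $AA^\top$ explicitly, read off the singular value decomposition from its eigenstructure, and then obtain the two row norms as the diagonal entries of the projection-type matrices $UU^\top$ and $VV^\top$.

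First I would compute $AA^\top = \sum_{i<j}(e_i-e_j)(e_i-e_j)^\top$. Each summand puts $+1$ on the diagonal entries $(i,i),(j,j)$ and $-1$ on the entries $(i,j),(j,i)$; summing over all $\binom{m}{2}$ pairs leaves $m-1$ on each diagonal entry and $-1$ on each off-diagonal entry, so $AA^\top = mI - \mathbf{1}\mathbf{1}^\top$, the Laplacian of the complete graph $K_m$. Its spectrum is immediate: $\mathbf{1}\mathbf{1}^\top$ has eigenvalue $m$ on $\mathrm{span}(\mathbf{1})$ and $0$ on $\mathbf{1}^\perp$, so $AA^\top$ has eigenvalue $0$ with multiplicity one (eigenvector $\mathbf{1}/\sqrt{m}$) and eigenvalue $m$ with multiplicity $m-1$ (eigenspace $\mathbf{1}^\perp$). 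This already shows $A$ has rank $m-1$ and that all its nonzero singular values equal $\sqrt{m}$, so we may write $A=\sqrt{m}\,UV^\top$, where the columns of $U$ form an orthonormal basis of $\mathbf{1}^\perp\subset\mathbb{R}^m$ and $V$ has orthonormal columns.

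For the rows of $U$, since its orthonormal columns span $\mathbf{1}^\perp$, the matrix $UU^\top$ is the orthogonal projection onto $\mathbf{1}^\perp$, i.e. $UU^\top = I - \tfrac{1}{m}\mathbf{1}\mathbf{1}^\top$; the squared norm of the $i$-th row of $U$ is its $i$-th diagonal entry, namely $1-\tfrac{1}{m}=\tfrac{m-1}{m}$, giving $\sqrt{(m-1)/m}$. For the rows of $V$, I would use $V=\tfrac{1}{\sqrt{m}}A^\top U$, which follows from $A=\sqrt{m}\,UV^\top$ and $U^\top U = I_{m-1}$. Then $VV^\top = \tfrac{1}{m}A^\top UU^\top A = \tfrac{1}{m}A^\top\!\left(I-\tfrac{1}{m}\mathbf{1}\mathbf{1}^\top\right)\!A$. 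The crucial simplification is that every column $e_i-e_j$ of $A$ sums to zero, so $\mathbf{1}^\top A = 0$ and the rank-one correction vanishes, leaving $VV^\top = \tfrac{1}{m}A^\top A$. The $(ij,ij)$ diagonal entry of $A^\top A$ is $\|e_i-e_j\|_2^2 = 2$, so each row of $V$ has squared norm $2/m$ and hence norm $\sqrt{2/m}$. The whole argument is elementary linear algebra; the only point that needs care is the observation $\mathbf{1}^\top A = 0$, which kills the projection correction and produces the clean factor $2/m$, so I anticipate no genuine obstacle beyond the bookkeeping.
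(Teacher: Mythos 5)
Your proof is correct and takes essentially the same approach as the paper's: both diagonalize $AA^\top = mI - \mathbf{1}\mathbf{1}^\top$ as the complete-graph Laplacian, read off the rank, the common singular value $\sqrt{m}$, and the SVD, and then exploit $V = \frac{1}{\sqrt{m}}A^\top U$. The only cosmetic difference is the final bookkeeping: the paper writes $V_{ij} = \frac{1}{\sqrt{m}}(U_i - U_j)$ and uses orthonormality of the rows of the orthogonal matrix $[U, \frac{1}{\sqrt{m}}\mathbf{1}]$, whereas you compute $VV^\top = \frac{1}{m}A^\top A$ via the observation $\mathbf{1}^\top A = 0$; both yield $\|V_{ij}\|_2 = \sqrt{2/m}$.
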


We are ready to prove Lemma~\ref{result:large_b_small_angle}.
Since $V^\top$ is an orthonormal basis of the row space of $A$,
the projection of a row vector $\eta$ onto this space is given by $\eta VV^\top$.
When represented in the basis $V^\top$, the projection is simply $\eta V$.
Moreover, the cosine of the angle between $\eta$ and the row space of $A$
is $\frac{||\eta V||_2}{||\eta||_2}$.

Using the properties of $A$ proved in \ref{result:A}, we have
  \begin{align*}
    ||\eta V||_2^2 = \eta VV^\top \eta^\top
     = \frac{1}{m}\eta {A^\top}A \eta^\top
     = \frac{1}{m}||A \eta^\top ||^2.
  \end{align*}
  Note that
  \begin{align*}
    (A \eta^\top)_i = \#\{j: \theta_{k, j}<\theta_{k, i}\}-\#\{j: \theta_{k, j}>\theta_{k, i}\}.
  \end{align*}
  By the assumption that $\theta_{k,i}\ne \theta_{k, j}$ for any $i$ and $j$, the vector $A\eta^\top$ is always a permutation of the deterministic vector
  \begin{align*}
    [-(m-1), -(m-3), \dots, m-3, m-1]^\top
  \end{align*}
  representing the net wins of the items. Therefore,
  \begin{align*}
    ||\eta V||_2^2 = &\frac{1}{m}||[-(m-1), -(m-3), \dots, m-3, m-1] ||^2 \\ = &\frac{1}{3}(m^2-1).
  \end{align*}
Since $||\eta||_2^2 = \frac{1}{2}m(m-1)$, the angle between $\eta$ and row space of $A$ is
$\arccos{\sqrt{\frac{2}{3}}}$ in the limit as $m\rightarrow\infty$.

\subsection{Proof of Lemma~\ref{result:separation_of_S_bar}}

We prove the lemma by considering the two regimes of $b$ separately in the following two lemmas.
\begin{lemma}\label{result:separation_of_S_bar_small_b}
	Assume $m\geq C'\log r$. If $b\in [b_0, 5]$, then a.a.s. there exists some constant $C$ such that for any $k\ne k'$,
	\begin{align*}
		||\bar{S}_k-\bar{S}_{k'}|| \geq C(1-\epsilon)m.
	\end{align*}
\end{lemma}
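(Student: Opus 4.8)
The starting point is the identity $\bar{S}_k = \E{S_u} = \frac{1-\epsilon}{\sqrt{m}} f(\theta_k A) A^\top$ for $u$ in cluster $k$, where $f(x)=\frac{e^x-1}{e^x+1}$ is applied entrywise. Writing $g \triangleq f(\theta_k A)-f(\theta_{k'}A)$, we have $\bar{S}_k-\bar{S}_{k'}=\frac{1-\epsilon}{\sqrt{m}}\,g A^\top$. The plan is to first convert this into a projection norm using Lemma~\ref{result:A}: since $A=\sqrt{m}UV^\top$ with $U^\top$ having orthonormal rows, $gA^\top=\sqrt{m}\,(gV)U^\top$, so $\|gA^\top\|_2=\sqrt{m}\,\|gV\|_2=\sqrt{m}\,\|\mathrm{proj}_{\mathrm{row}(A)}g\|_2$. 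Hence it suffices to show $\|\mathrm{proj}_{\mathrm{row}(A)}g\|_2 \ge C\sqrt{m}\,(1-\epsilon)^{-1}\cdot(1-\epsilon)m/\sqrt m$, i.e. to lower bound the norm of the projection of $g$ onto the row space of $A$ by a constant multiple of $m$.

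\textbf{Test direction.} To avoid fighting the nonlinearity of $f$ directly, I would lower bound the projection norm by testing $g$ against a single convenient direction inside $\mathrm{row}(A)$, namely $w \triangleq (\theta_k-\theta_{k'})A$, whose $ij$-th coordinate is $w_{ij}=(\theta_{k,i}-\theta_{k,j})-(\theta_{k',i}-\theta_{k',j})$. Since $w\in\mathrm{row}(A)$, we have $\|\mathrm{proj}_{\mathrm{row}(A)}g\|_2 \ge \langle g,w\rangle/\|w\|_2$. The key estimate is monotonicity of $f$: because $|\theta_{k,i}-\theta_{k,j}|\le b$ and $|\theta_{k',i}-\theta_{k',j}|\le b$, the mean value theorem gives, for each pair, $[f(\phi_{ij})-f(\psi_{ij})](\phi_{ij}-\psi_{ij})=f'(\xi_{ij})(\phi_{ij}-\psi_{ij})^2\ge f'(b)\,w_{ij}^2$, where $f'$ is even and decreasing in $|x|$ so $\min_{|x|\le b}f'(x)=f'(b)>0$. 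Summing over $i<j$ yields $\langle g,w\rangle \ge f'(b)\|w\|_2^2$, hence $\|\mathrm{proj}_{\mathrm{row}(A)}g\|_2 \ge f'(b)\|w\|_2$. This reduces everything to a lower bound on $\|w\|_2$.

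\textbf{Concentration of $\|w\|_2$.} Setting $\delta_i \triangleq \theta_{k,i}^0-\theta_{k',i}^0$ (independent across $i$, each a difference of two independent $\mathrm{Unif}[0,b]$ variables, so bounded with variance $b^2/6$), one checks $w_{ij}=\delta_i-\delta_j$ and therefore $\|w\|_2^2=\sum_{i<j}(\delta_i-\delta_j)^2=m\sum_i(\delta_i-\bar\delta)^2$. Since the $\delta_i$ are bounded, a Hoeffding/Bernstein bound shows $\sum_i(\delta_i-\bar\delta)^2\ge \frac12\,\mathbb{E}\!\left[\sum_i(\delta_i-\bar\delta)^2\right]=\Theta(mb^2)$ with probability $1-e^{-\Omega(m)}$; a union bound over the $\binom{r}{2}$ cluster pairs then holds simultaneously provided $m\ge C'\log r$, which is exactly the hypothesis. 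This gives $\|w\|_2\ge c\,mb$ a.a.s. for all pairs. Combining the steps, $\|\bar{S}_k-\bar{S}_{k'}\|_2 = (1-\epsilon)\|\mathrm{proj}_{\mathrm{row}(A)}g\|_2 \ge (1-\epsilon)f'(b)\,\|w\|_2 \ge C(1-\epsilon)m$, with $C$ depending only on $b$ through $f'(b)b$, bounded below by a constant on $b\in[b_0,5]$.

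\textbf{Main obstacle.} The only genuinely delicate point is handling the nonlinearity of $f$ for moderate $b$ (the naive linearization $f(x)\approx x/2$ is not valid up to $b=5$); the inner-product trick against the direction $w$ sidesteps this cleanly, turning the nonlinearity into the harmless factor $f'(b)$ via monotonicity. The remaining work is the routine but necessary concentration bound on $\|w\|_2$ together with the union bound, which is precisely where the assumption $m\ge C'\log r$ enters.
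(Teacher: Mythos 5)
Your proof is correct, and it takes a genuinely different route from the paper's. The paper's proof linearizes $f$ at zero: it writes $\bar{S}_k$ as the linear term $\frac{1-\epsilon}{4}\sqrt{m}\,\theta_k UU^\top$ plus a nonlinearity error, lower bounds the linear part via concentration of $\|\theta_k-\theta_{k'}\|_2$, upper bounds the error by $\frac{\delta(b)}{b}\left(\|\theta_k\|_2+\|\theta_{k'}\|_2\right)$ with $\delta(b)=|f(b)-b/2|$ (which requires an additional concentration upper bound on $\|\theta_k\|_2$), and finally needs the numerical inequality $\frac{1}{2}\sqrt{0.9/6}>\sqrt{1.1/3}\,\delta(5)/5$ --- this comparison of constants is precisely where the hypothesis $b\le 5$ enters. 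Your inner-product trick against the direction $w=(\theta_k-\theta_{k'})A$ replaces all of this: by the mean value theorem and the fact that $f'(x)=2e^x/(e^x+1)^2$ is even and decreasing in $|x|$, you get $\langle g,w\rangle\ge f'(b)\|w\|_2^2$, so the only probabilistic input is the lower bound on $\|w\|_2=\sqrt{m}\,\|\theta_k-\theta_{k'}\|_2$, which is the same concentration step the paper performs; the upper bound on $\|\theta_k\|_2$ and the delicate constant comparison disappear entirely. This buys something real: your constant is $\asymp\min_{b\in[b_0,5]} b f'(b)>0$ and the argument works verbatim on any fixed interval $[b_0,B]$ (with constant degrading like $Be^{-B}$), thereby removing the upper bound $b\le 5$ that the paper itself flags as an artifact of its analysis and lists as an open direction; it also extends immediately to any differentiable increasing sigmoid comparison function, strengthening the paper's remark on model generality. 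Two immaterial slips: with the sample-splitting step of Algorithm~1, the prefactor in $\bar{S}_k$ is $\frac{1-\epsilon}{2}$ rather than $1-\epsilon$ (harmless, since the claim is only up to a constant), and your reduction target $C\sqrt{m}\,(1-\epsilon)^{-1}(1-\epsilon)m/\sqrt{m}$ simplifies to just $Cm$.
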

\begin{proof}
  Due to space constraint, we only sketch the proof in this subsection. A full proof is provided in the Appendix.

  By definition,
  $
    \bar{R}^{(1)}_{u, ij} = \frac{1-\epsilon}{2}f(\theta_{u, i}-\theta_{u, j}).
  $
  The function $f(x)$ is nonlinear but it can be approximated by the linear function $x/2$ when $x$ is close to $0$.
  Since $| \theta_{ki}- \theta_{k j} | \le b$ for all $i,j$, the maximum approximation error is given by
  $
    \delta(b) \triangleq |f(b)-\frac{b}{2}|.
  $

	By definition, for any $k$,
    \begin{align*}
      \bar{S}_k = & \frac{1-\epsilon}{2}f(\theta_k^\top A)VU^\top\\
      = & \frac{1-\epsilon}{2}\frac{1}{2}\theta_k^\top \sqrt{m}UU^\top+\frac{1-\epsilon}{2}(f(\theta_k^\top A)-\frac{1}{2}\theta_k^\top A)VU^\top.
    \end{align*}
  Then, by triangle inequality,
  \begin{align*}
    &||\bar{S}_k-\bar{S}_{k'}||_2\\
    \geq& \frac{1-\epsilon}{4}\sqrt{m}||(\theta_k-\theta_{k'})U||_2-\frac{1-\epsilon}{2} \Big[||(f(\theta_k^\top A)-\frac{1}{2}\theta_k^\top A)V||_2 \\&+ ||(f(\theta_{k'}^\top A)+ \frac{1}{2}\theta_{k'}^\top A)V||_2 \Big]\\
    \geq &\frac{1-\epsilon}{2} \sqrt{m} \Big[\frac{1}{2}||\theta_k-\theta_{k'}||_2-\frac{\delta(b)}{b}(||\theta_k||_2+||\theta_{k'}||_2)\Big].
  \end{align*}
  Using Hoeffding's inequality and Bernstein's inequality, we show that, with high probability,
  \begin{align*}
    ||\theta_k-\theta_{k'}||_2\geq \sqrt{0.9mb^2/6}, \quad
    \| \theta_k\|_2 \leq \sqrt{1.1mb^2/12}.
  \end{align*}
  Therefore,
  \begin{align*}
    ||\bar{S}_k-\bar{S}_{k'}||_2 \geq& \frac{1}{2}\left(\frac{1}{2}\sqrt{\frac{0.9}{6}}-\sqrt{\frac{1.1}{3}}\frac{\delta(b)}{b} \right)b(1-\epsilon)m \\ \ge&  C(1-\epsilon)m,
  \end{align*}
  under our assumption on $b$.
\end{proof}

\begin{lemma}\label{result:separation_of_S_bar_large_b}
	Assume $m\geq C'\log r$. If $b\geq C''m^3\log m$, then a.a.s.\ there exists some constant $C$ such that for any $k\ne k'$,
	\begin{align*}
		||\bar{S}_k-\bar{S}_{k'}|| \geq C(1-\epsilon)m.
	\end{align*}
\end{lemma}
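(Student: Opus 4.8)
The plan is to reduce to the ``sign'' regime already handled by Lemma~\ref{result:large_b_small_angle}. When $b\ge C''m^3\log m$, the gaps $|\theta_{k,i}-\theta_{k,j}|$ are so large that $f$ essentially coincides with its sign, so I would first replace $f(\theta_k A)$ by the combinatorial vector $\eta^{(k)}\triangleq \sign{\theta_k A}$ and show the replacement error is negligible. Since $|f(x)-\sign{x}|=\frac{2}{e^{|x|}+1}\le 2e^{-|x|}$, the error is controlled by the smallest gap $\min_{i\ne j}|\theta_{k,i}-\theta_{k,j}|$. For $m$ i.i.d.\ uniform points on $[0,b]$ the minimum spacing is $\Omega(b/m^2)$ with high probability, so every gap exceeds $c\,b/m^2\ge c\,C''m\log m$; hence $\|f(\theta_k A)-\eta^{(k)}\|_2\le 2\sqrt{\binom{m}{2}}\,e^{-cC''m\log m}$, which is super-polynomially small and negligible throughout.

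Next I would pass to the net-win representation. Writing $\bar S_k=\frac{1-\epsilon}{2}f(\theta_k A)VU^\top$ and using that $U^\top$ acts isometrically on row vectors (Lemma~\ref{result:A}, $U^\top U=I$), one gets $\|\bar S_k-\bar S_{k'}\|_2=\frac{1-\epsilon}{2}\|(f(\theta_k A)-f(\theta_{k'}A))V\|_2$. Dropping the negligible error, it remains to lower bound $\|(\eta^{(k)}-\eta^{(k')})V\|_2$. By the identity established in the proof of Lemma~\ref{result:large_b_small_angle}, $\|\eta V\|_2^2=\frac1m\|A\eta^\top\|_2^2$, and $A\eta^{(k)\top}$ is exactly the net-win vector $w^{(k)}$ whose $i$-th entry is $\#\{j:\theta_{k,j}<\theta_{k,i}\}-\#\{j:\theta_{k,j}>\theta_{k,i}\}=2\sigma_k(i)-(m+1)$, where $\sigma_k$ is the ranking permutation induced by $\theta_k$. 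Thus $\|(\eta^{(k)}-\eta^{(k')})V\|_2^2=\frac1m\|w^{(k)}-w^{(k')}\|_2^2=\frac4m D_{k,k'}$ with $D_{k,k'}\triangleq\sum_i(\sigma_k(i)-\sigma_{k'}(i))^2$. Because the $\theta_{k,i}^0$ are i.i.d.\ uniform, each $\sigma_k$ is a uniform permutation and $\sigma_k,\sigma_{k'}$ are independent for $k\ne k'$.

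The heart of the argument is then a concentration bound for $D_{k,k'}$. A direct computation gives $\E{D_{k,k'}}=\frac{m(m^2-1)}{6}=\Theta(m^3)$ (after replacing $\sigma_{k'}^{-1}\sigma_k$ by a single uniform permutation, this is the mean of Spearman's statistic $\sum_i(\pi(i)-i)^2$). To rule out all $\binom r2$ pairs at once I would use the bounded-differences inequality for functions of a uniform random permutation: transposing two values changes $D_{k,k'}$ by at most $O(m^2)$, which gives $\p{D_{k,k'}\le \tfrac12\E{D_{k,k'}}}\le \exp(-c\,m)$, since the deviation scale $\Theta(m^3)$ against a Lipschitz constant $O(m^2)$ over $m$ coordinates yields an exponent of order $\frac{(m^3)^2}{m\cdot(m^2)^2}=\Theta(m)$. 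A union bound over the $\binom r2\le r^2$ pairs then fails with probability at most $r^2e^{-cm}$, which vanishes precisely because $m\ge C'\log r$ with $C'$ large. On this event $\|(\eta^{(k)}-\eta^{(k')})V\|_2^2=\frac4m D_{k,k'}\ge \frac2m\E{D_{k,k'}}=\frac{m^2-1}{3}$ for every pair, so $\|(\eta^{(k)}-\eta^{(k')})V\|_2=\Omega(m)$; adding the $\frac{1-\epsilon}{2}$ factor and subtracting the negligible sign-approximation error yields $\|\bar S_k-\bar S_{k'}\|_2\ge C(1-\epsilon)m$.

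I expect the concentration step to be the main obstacle: the naive variance computation only gives Chebyshev-type (polynomial) tails, which are too weak to survive a union bound over $\Theta(r^2)$ pairs, so the crux is obtaining exponential concentration and verifying that its natural scale $\Theta(m^3)$ matches $\E{D_{k,k'}}$, so that a constant-fraction lower bound holds with probability $1-e^{-\Omega(m)}$. The minimum-gap estimate for uniform order statistics is routine but must be carried out uniformly over all $r$ clusters to ensure the sign approximation is negligible everywhere.
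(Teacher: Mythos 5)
Your skeleton is the same as the paper's: replace $f(\theta_k A)$ by the sign vector $\eta^{(k)}$, show the replacement error is negligible, and lower-bound $\|(\eta^{(k)}-\eta^{(k')})V\|_2$ using the fact that $A\eta^{(k)\top}$ is a uniform random permutation of the fixed vector $[-(m-1),-(m-3),\dots,m-1]$. The one genuinely different ingredient is the concentration step, and there your route is sound and in one respect better. The paper expands $\|(\eta_k-\eta_{k'})V\|_2^2=\frac{2}{3}(m^2-1)-\frac{2}{m}Z$ with $Z=\eta_kA^\top A\eta_{k'}^\top$ and applies Azuma's inequality to the Doob martingale of $Z$ at deviation level $C_3m^{5/2}\log^{1/2}m$, which yields only polynomial tails per pair; your reformulation $\|(\eta^{(k)}-\eta^{(k')})V\|_2^2=\frac{4}{m}D_{k,k'}$ with $D_{k,k'}=\sum_i(\sigma_k(i)-\sigma_{k'}(i))^2$ controls the very same object (indeed $D_{k,k'}=\frac{m(m^2-1)}{6}-\frac{Z}{2}$), but the transposition bounded-differences inequality at deviation $\Theta(m^3)$ gives tails $e^{-\Omega(m)}$, which is exactly what survives the union bound over $\binom{r}{2}$ pairs under the standing assumption $m\ge C'\log r$ alone. (The paper's argument can be repaired identically by taking $t=\Theta(m^3)$ in Azuma; but as written, your treatment of the union bound is the more careful of the two.) Your mean computation $\E{D_{k,k'}}=\frac{m(m^2-1)}{6}$, the identity $w^{(k)}_i=2\sigma_k(i)-(m+1)$, and the Lipschitz constant $O(m^2)$ per transposition all check out.

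One intermediate claim is false as stated, though harmlessly so: the minimum spacing of $m$ i.i.d.\ uniform points on $[0,b]$ is \emph{not} $\Omega(b/m^2)$ with high probability --- the probability that it exceeds $cb/m^2$ tends to $e^{-c}$, a constant strictly less than $1$. The fix is immediate: use the threshold $b/(m^2\log m)$, which the minimum spacing does exceed with high probability, or simply follow the paper and demand only that all gaps be at least $1$, which holds by a union bound since each pair violates it with probability at most $2/b$ (the paper then needs a short counting argument to get $\|f(\theta_kA)-\eta_k\|_2\le C_1\sqrt m$, whereas with gaps of order $m$ your error stays super-polynomially small and the counting argument is unnecessary). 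Either way nothing downstream changes. Note also, as in the paper's own write-up, that the gap event must be intersected over all $r$ clusters, so this step silently assumes $r$ grows at most polynomially in $m$.
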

\begin{proof}
  Due to space constraint, we only sketch the proof in this subsection. A full proof is provided in the Appendix.

  The assumption that $b$ is large implies that, with high probability, $|\theta_{k, i}-\theta_{k, j}|\geq 1$ for any $k$ and $i\ne j$.

  By definition, $\bar{S}_k = \frac{1-\epsilon}{2}f(\theta_k A)VU^\top$. Define $$\eta_{k, ij} = \1{\theta_{k, i}>\theta_{k, j}}-\1{\theta_{k, i}<\theta_{k, j}}$$ to be the signed indicator variable of the order between $\theta_{k, i}$ and $\theta_{k, j}$, and $f(\theta_kA)$ is close to $\eta_k$ when $b$ is large. Then
  \begin{align*}
    ||\bar{S}_k-\bar{S}_{k'}|| = &\frac{1-\epsilon}{2}||(f(\theta_k A)-f(\theta_{k'}A))V||_2\\
    \geq & \frac{1-\epsilon}{2}\Big[||(\eta_k-\eta_{k'})V||_2-||(f(\theta_kA)-\eta_k)V||_2\\&-||(f(\theta_{k'}A)-\eta_{k'})V||_2\Big].
  \end{align*}
  First by a counting argument we show that $||(f(\theta_kA)-\eta_k)V||_2\leq C_1\sqrt{m}$.

  Next we show that $||(\eta_k-\eta_{k'})V||_2\geq C_2m$. Observe that
  \begin{align*}
    ||(\eta_k-\eta_{k'})V||_2^2 = & ||\eta_kV||_2^2+||\eta_{k'}V||_2^2-2\eta_kVV^\top \eta_{k'}^\top \notag\\
    = & \frac{2}{3}(m^2-1)-\frac{2}{m}\eta_kA^\top A \eta_{k'}^\top, 
  \end{align*}
  where the second equality follows from Lemma~\ref{result:large_b_small_angle}. By definition of $A$, $(\eta_kA^\top)_i$ represents the number of $\theta_j$ that are smaller than $\theta_i$ minus the number of $\theta_j$ that are larger than $\theta_i$. Therefore, $\eta_kA^\top$ and $\eta_{k'}A^\top$ are independent random permutations of the deterministic vector $[-(m-1), -(m-3), \dots, m-3, m-1]$. Without loss of generality, assume $\eta_kA^\top = [-(m-1), -(m-3), \dots, m-3, m-1]$ and denote $\eta_{k'}A^\top$ by $x$ which is a random permutation of $\eta_kA^\top$. Let $Z = \eta_kA^\top A \eta_{k'}^\top = -(m-1)x_1+\dots+(m-1)x_m$ and define the martingale $y_i = \E{Z|x_1, \dots, x_i}$. Using Azuma's inequality on $Z$, we can show that $|\eta_kA^\top A \eta_{k'}^\top|\leq C_3m^{5/2}\log^{1/2}m$ with high probability, thus $||(\eta_k-\eta_{k'})V||_2\geq C_2m$.

  Combining the above two steps, we conclude that
   $ ||\bar{S}_k-\bar{S}_{k'}|| \geq C(1-\epsilon)m.$
\end{proof}

\subsection{Proof of Lemma \ref{result:concentration_vector_norm}}
Recall that for $i<j$, $V_{ij}$ denote the $(i,j)$-th row of $V$.
	Rewrite $||{S}_u-\bar{S}_u||$ as

	\begin{align*}
		||{S}_u-\bar{S}_u|| = ||\sum_{i<j}({R}^{(1)}_{u, ij}-\bar{R}^{(1)}_{u, ij})V_{ij}|| \triangleq ||\sum_{i<j}Z_{ij}||,
	\end{align*}
  where $Z_{ij} = ({R}^{(1)}_{u, ij}-\bar{R}^{(1)}_{u, ij})V_{ij}$.
	Note that $||Z_{ij}||_2 \leq ||V_{ij}||_2 = \sqrt{2/m}$. Since
  $
  	\Var{{R}^{(1)}_{u, ij}}\leq \E{({R}^{(1)}_{u, ij})^2} = \frac{1-\epsilon}{2},
  $
  we have
	\begin{align*}
		\sum_{i<j}\E{||Z_{ij}||_2^2} \leq \frac{1-\epsilon}{2}\binom{m}{2}\frac{2}{m} \leq \frac{1-\epsilon}{2}m  \triangleq \sigma^2.
	\end{align*}
	Now we apply the vector Bernstein's inequality \cite[Theorem 12]{VectorBernstein}. We choose $t=3\sigma\sqrt{\log n}$ and under our assumption it satisfies $t\sqrt{2/m}\leq \sigma^2$. Then, for any $u$,
	\begin{align*}
		\p{||{S}_u-\bar{S}_u||_2> 4\sigma\sqrt{\log n}}\leq & \p{||{S}_u-\bar{S}_u||_2> \sigma+ t}\\
    \leq &\exp(-\frac{t^2}{4\sigma^2})\leq 1/n^2.
	\end{align*}
  Applying the union bound, we get the result.

\subsection{Proof of Lemma~\ref{result:concentration_spectral}}
We bound $||S-\bar{S}||$ by the matrix Bernstein's inequality \cite{Tropp12}. Let $X_u = e_u(S_u-\bar{S}_u)$, then $S-\bar{S} = \sum_u X_u$. First we bound $||X_u||$. Since
\begin{align*}
	||X_u||^2 = & ||X_uX_u^\top|| = ||S_u-\bar{S}_u||_2^2||e_ue_u^\top|| = ||S_u-\bar{S}_u||_2^2,
\end{align*}
by Lemma~\ref{result:concentration_vector_norm}, $||X_u||\leq 3\sqrt{\frac{1-\epsilon}{2}m\log n}$ with high probability. Next we bound
$$\sigma^2 \triangleq \max\{||\sum_u\E{X_uX_u^\top}||, ||\sum_u\E{X_u^\top X_u}||\}. $$
The covariance matrix for ${S}_u$ is $\Sigma_u = V^\top D_u V$, where $D_u = \mathrm{diag}([\Var{{R}_{u, ij}}]_{ij})\leq \frac{1-\epsilon}{2}I$. Then
\begin{align*}
  ||\sum_u\E{X_uX_u^\top}|| = & ||\sum_u\E{||S_u-\bar{S}_u||_2^2e_ue_u^\top}||\\
  = & \max_u \E{||S_u-\bar{S}_u||_2^2}\\
  = & \max_u \Tr{V^\top D_u V}.
\end{align*}
Since $D_u\leq \frac{1-\epsilon}{2}I$ and using the fact that $A\leq B$ implies $\Tr{V^\top (B-A)V}\geq 0$, we get $||\sum_u\E{X_uX_u^\top}||\leq \frac{1-\epsilon}{2}m$. Similarly,
\begin{align*}
  ||\sum_u\E{X_u^\top X_u}|| = & ||\sum_u\E{(S_u-\bar{S}_u)^\top (S_u-\bar{S}_u)}||\\
  = & ||U(\sum_u V^\top D_u V)U^\top||\\
  = & ||V^\top (\sum_u D_u)V||\\
  \leq & \frac{1-\epsilon}{2}n,
\end{align*}
where the last inequality follows from $D_u\leq \frac{1-\epsilon}{2}I$ and the fact that $A\leq B$ implies $||V^\top AV||\leq ||V^\top BV||.$
Therefore, $\sigma^2\leq \frac{1-\epsilon}{2}\max\{m, n\}$. Now by applying the matrix Bernstein's inequality, we get
\begin{align*}
	||S-\bar{S}||\leq &3\max\{\max_u||X_u||\log n, \sigma\sqrt{\log n}\}\\
  \leq & 8\sqrt{(1-\epsilon)\max\{m, n\}}\log^{3/2}n.
\end{align*}
with probability at least $1-2/n$.

\begin{figure*}[!t]
\begin{center}
\scalebox{.5}{\includegraphics{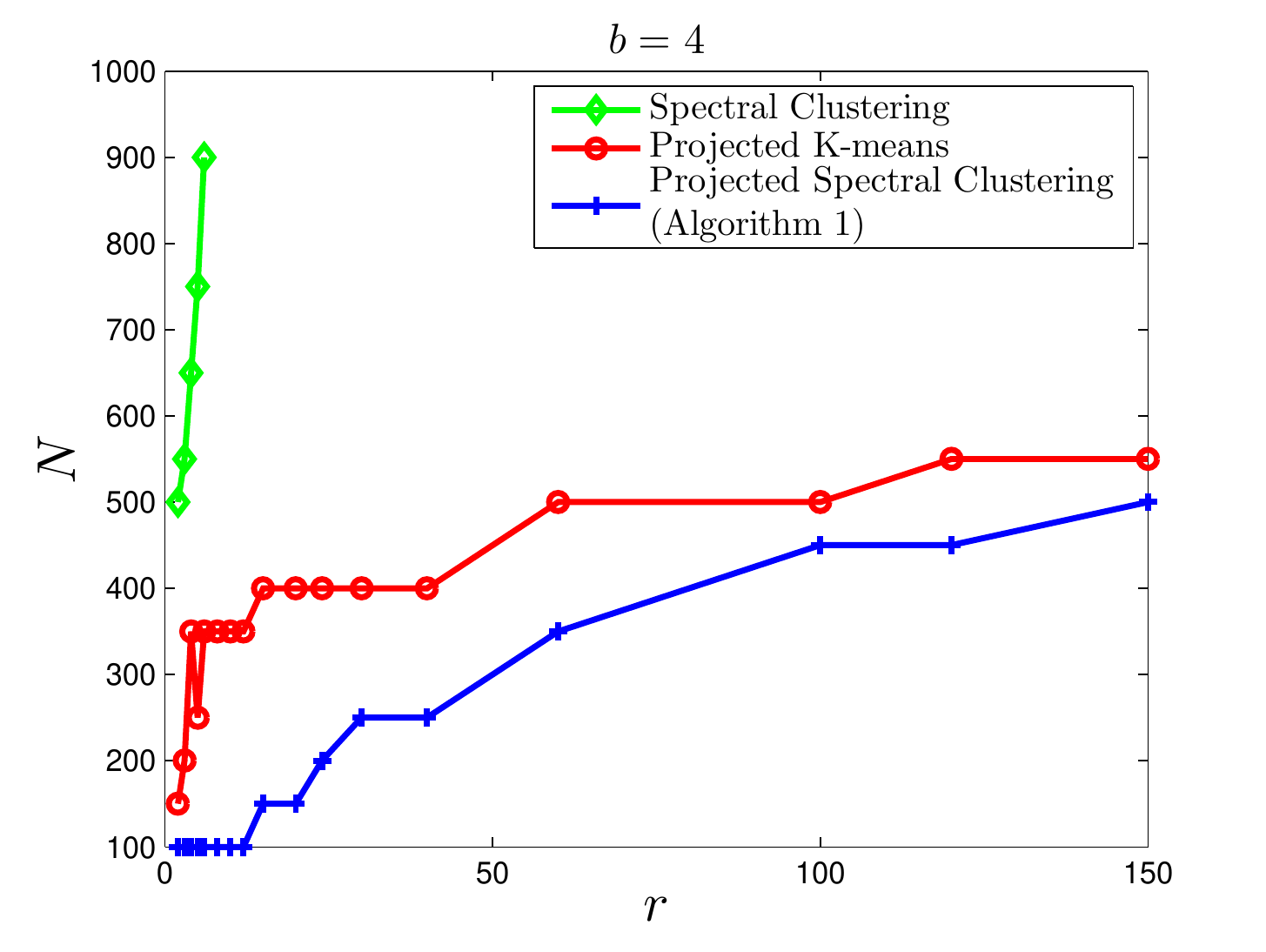}}\scalebox{.5}{\includegraphics{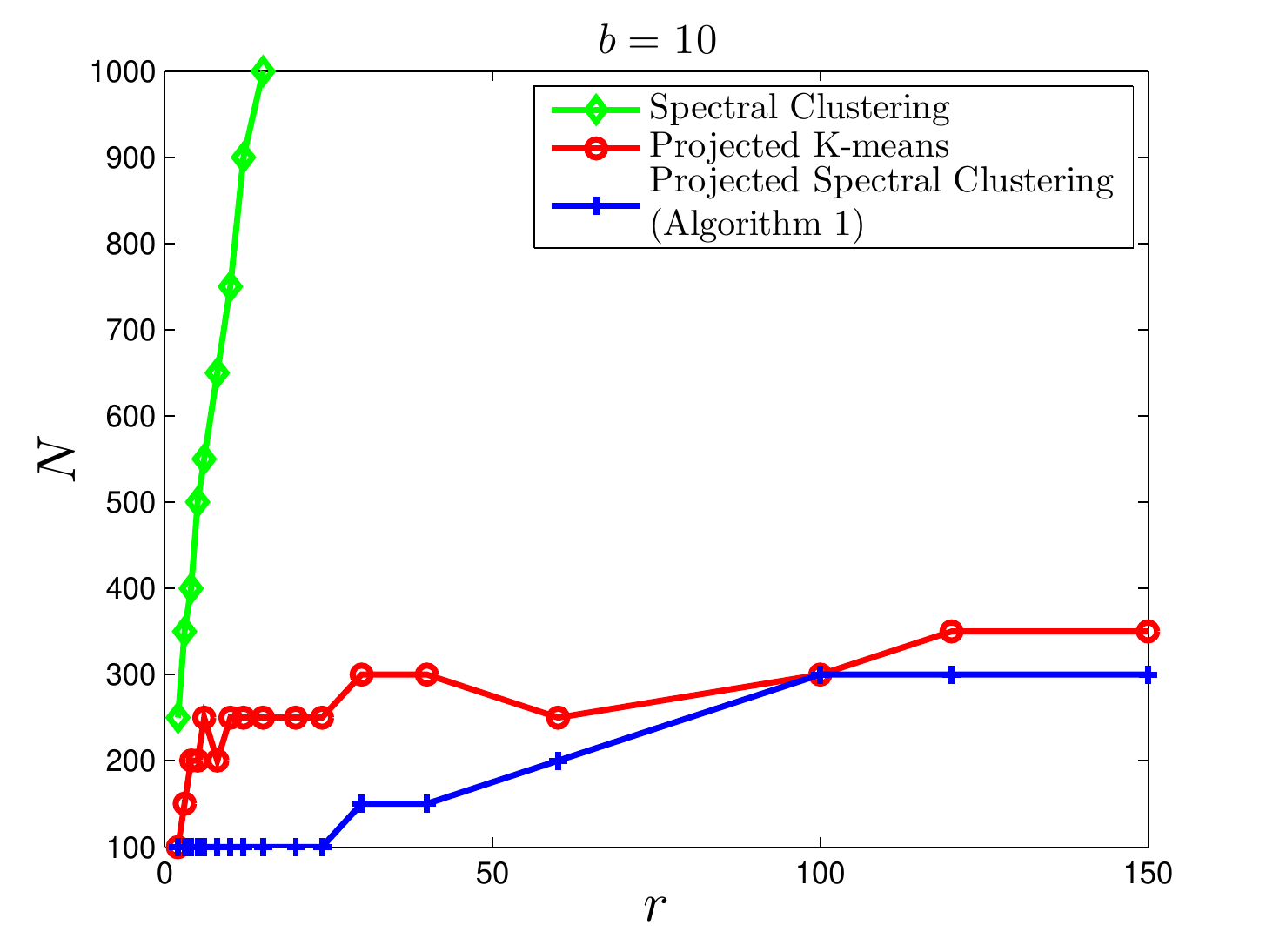}}
\end{center}
\caption{Performance Comparison of the standard spectral clustering algorithm and Algorithm~\ref{alg:algorithm_2}. The $y$-axis is expected number of comparisons $N$ provided by each user. The algorithms succeed in the parameter regime above the corresponding curves. The two algorithms using net-win vectors show significantly better clustering performance. }
\label{fig:clustering}
\end{figure*}

\subsection{Proof of Lemma~\ref{result:upper_bound_bad_user}}

We prove Lemma~\ref{result:upper_bound_bad_user} based on a counting argument.
  By Lemma~\ref{result:concentration_spectral}, with probability at least $1-2/n$, $$||{S}-\bar{S}||\leq 8\sqrt{(1-\epsilon)\max\{m, n\}}\log^{3/2} n.$$ Note that
  \begin{align*}
    ||\tilde{S}-\bar{S}|| \leq & ||\tilde{S}-{S}||+||{S}-\bar{S}||\\
    \leq & 2||{S}-\bar{S}||,
  \end{align*}
  where the second inequality follows from the definition of $\tilde{S}$ and the fact that $\bar{S}$ has rank $r$. Since the matrix $\tilde{S}-\bar{S}$ is of rank at most $2r$, we get
  \begin{align*}
    ||\tilde{S}-\bar{S}||_F^2\leq & (\sqrt{2r}||\tilde{S}-\bar{S}||)^2\\
    \leq & 8r||{S}-\bar{S}||^2\\
    \leq & 512(1-\epsilon)r\max\{m, n\}\log^3 n.
  \end{align*}
  As $||\tilde{S}-\bar{S}||_F^2 = \sum_u||\tilde{S}_u-\bar{S}_u||_2^2$, we conclude that there are at most $$\frac{512r\max\{m, n\}\log m \log n}{(1-\epsilon)m^2}$$ users with
  $
    ||\tilde{S}_u-\bar{S}_u||_2 > \frac{\tau}{2}.
  $

\section{Experiments}\label{sec:experiment}
\begin{figure*}[t!]
\begin{center}
\def\x{0.35}
\scalebox{\x}{\includegraphics{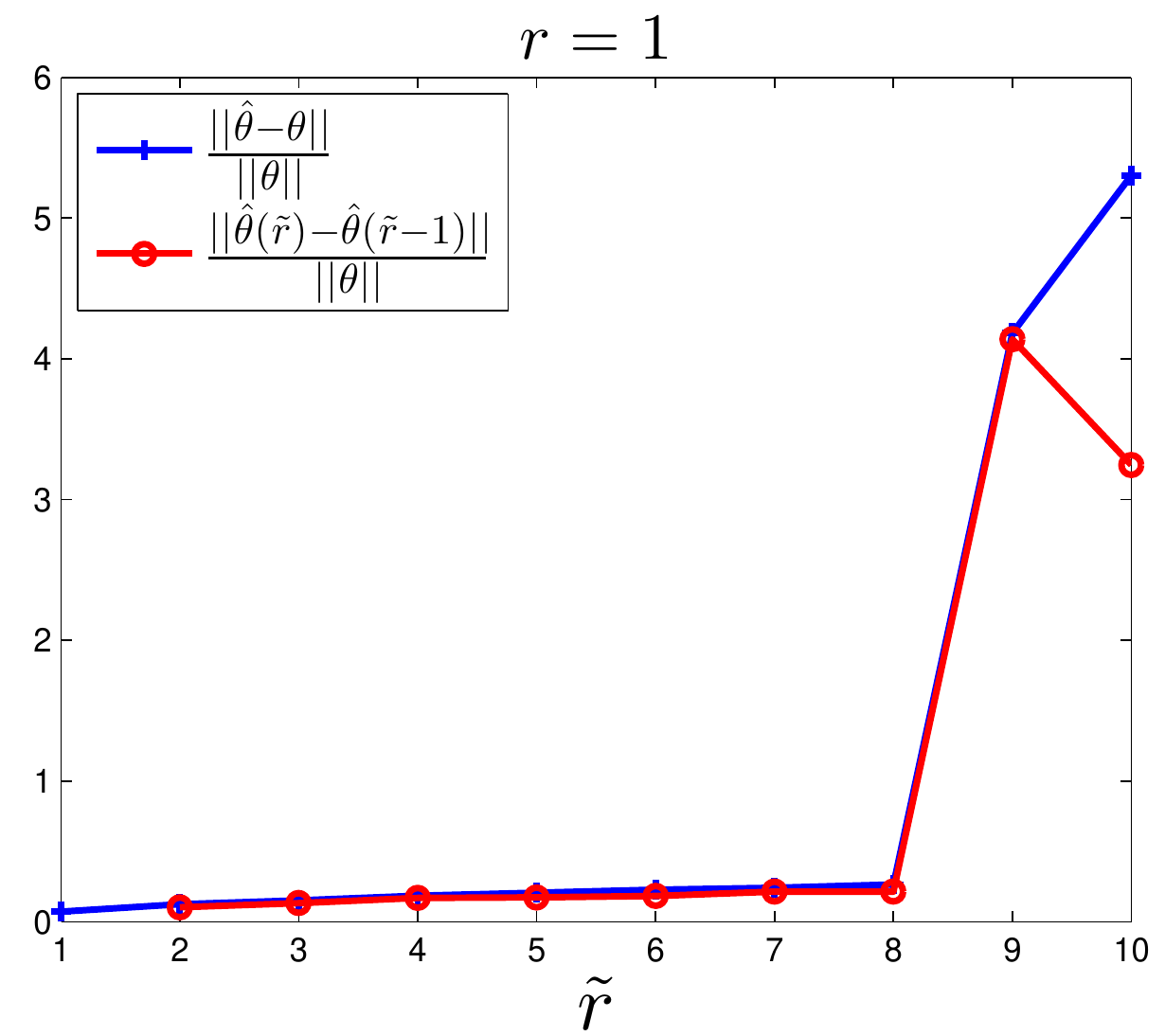}}\scalebox{\x}{\includegraphics{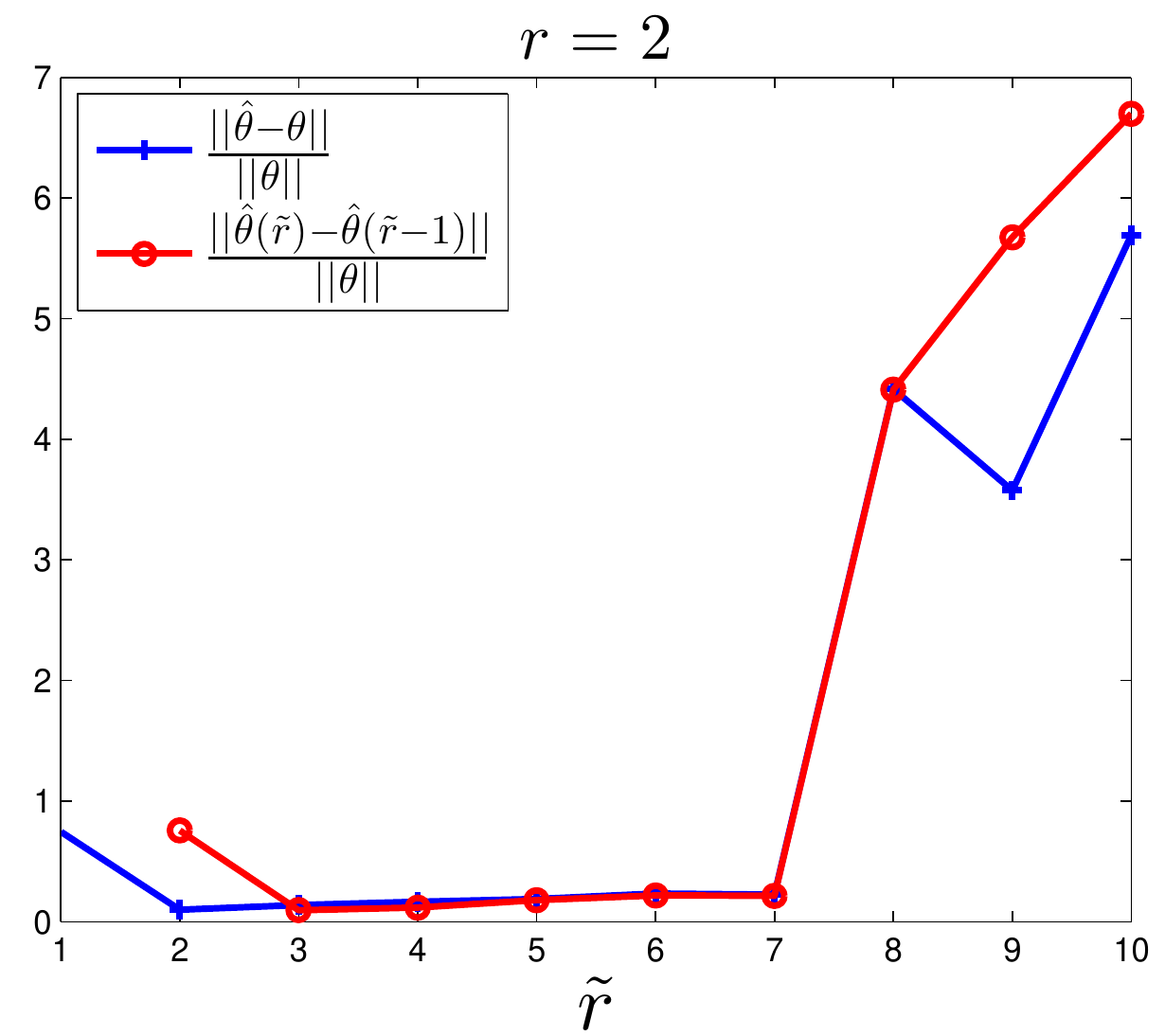}}
\scalebox{\x}{\includegraphics{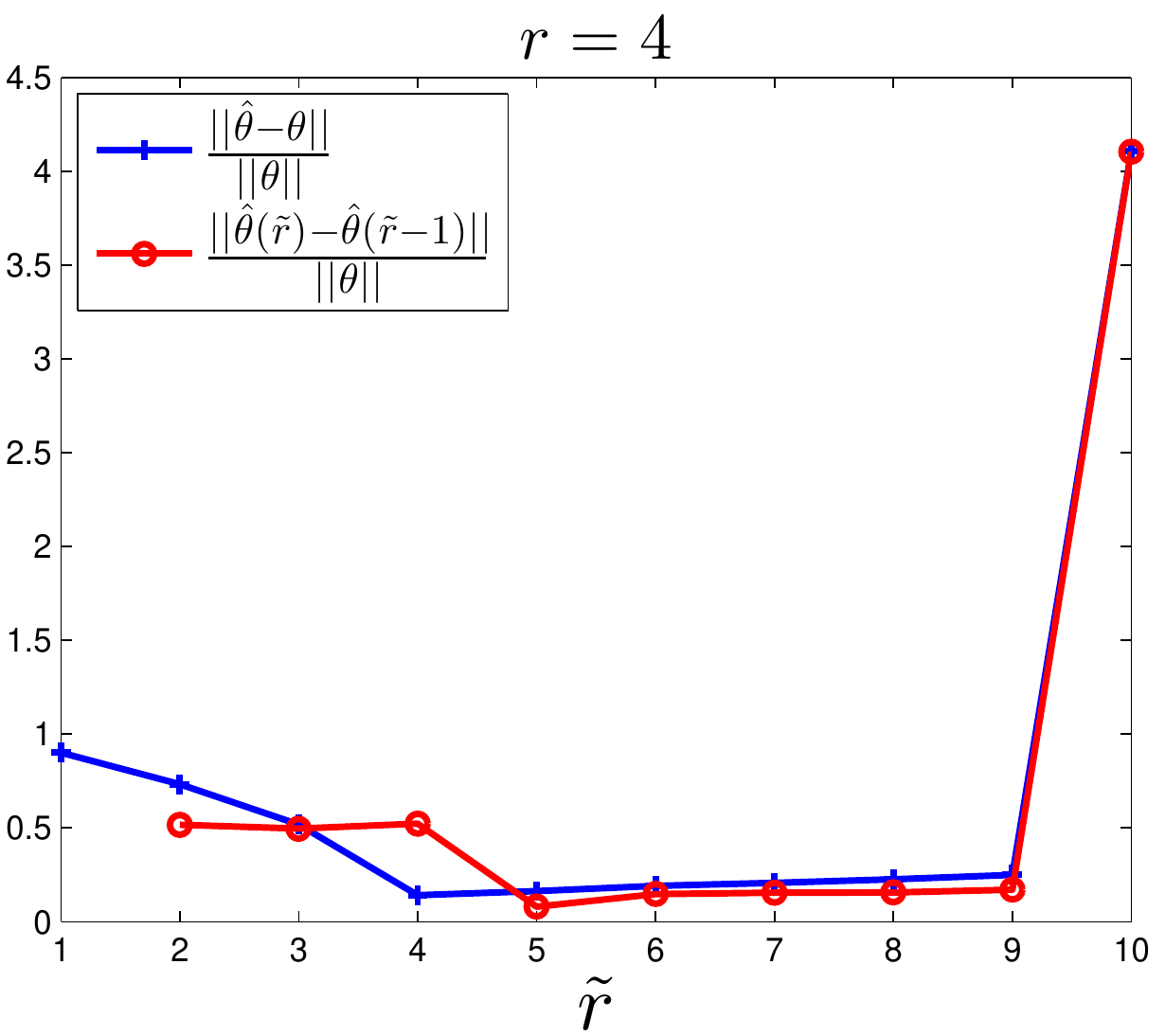}}\scalebox{\x}{\includegraphics{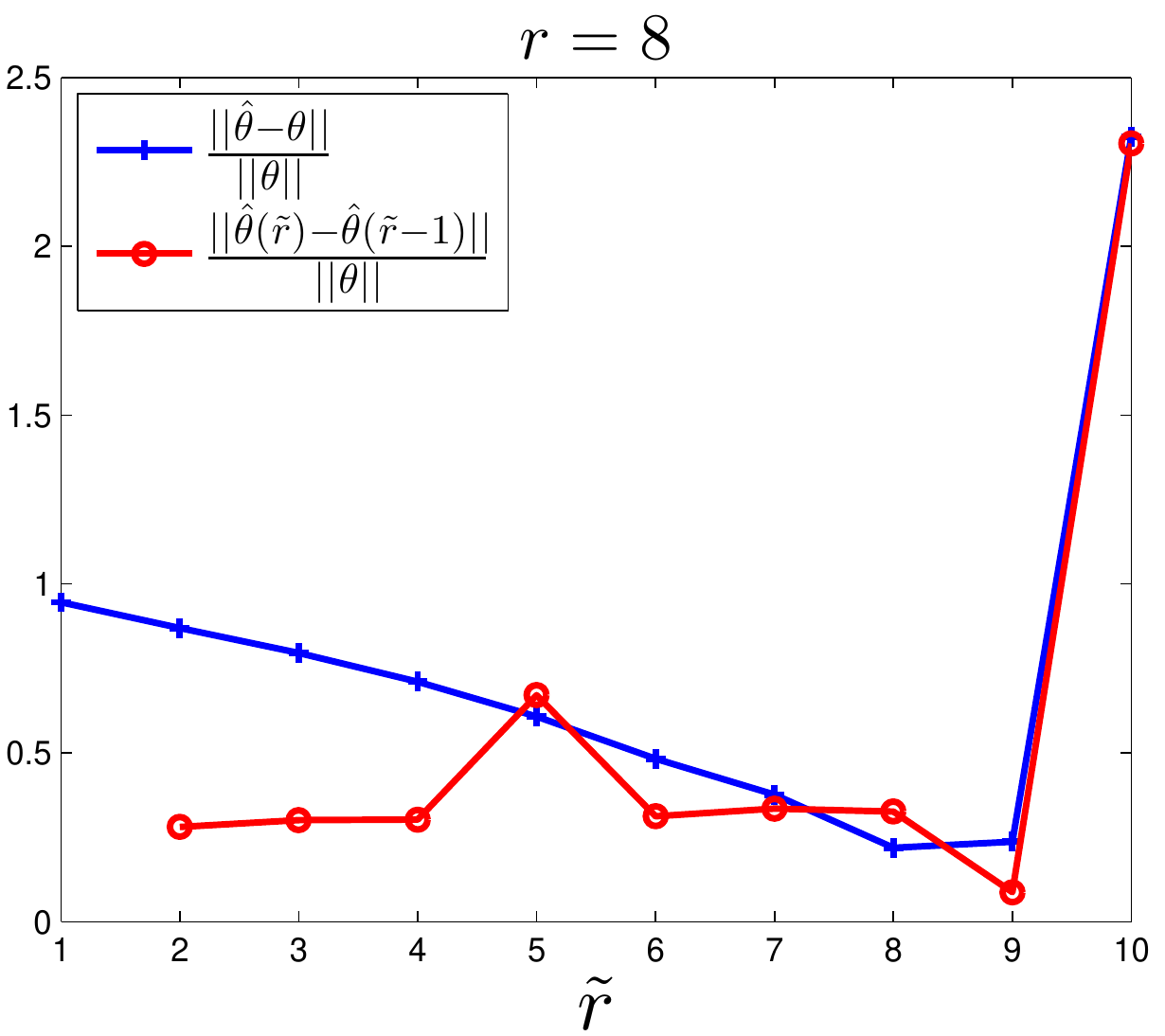}}
\end{center}
\caption{Score vector estimation for different $r$. For each $r$, the blue curve shows how the relative error $\frac{||\hat{\theta}-\theta||}{||\theta||}$ changes with $\tilde{r}$, and $\frac{||\hat{\theta}-\theta||}{||\theta||}$ is minimized when $\tilde{r} = r$. From the red curve, $r$ can be identified by looking for the $\tilde{r}$ such that the change $||\hat{\theta}(\tilde{r})-\hat{\theta}(\tilde{r}-1)||$ is minimized. }
\label{fig:score_estimation}
\end{figure*}

In this section, we illustrate the performance of our algorithm using synthetic data. Since the key idea of the paper is to show that the net-win vectors $S_u$ are easier to cluster than the comparison vectors $R_u$, in the first experiment, we compare the clustering performance of Algorithm~\ref{alg:algorithm_2} with two other clustering algorithms to verify the effectiveness of the dimension reduction. In the second experiment, we demonstrate the performance of score vector estimation and suggest a heuristic for estimating the number of clusters.

\subsection{Clustering performance comparison}

In Algorithm~\ref{alg:algorithm_2}, we cluster the rows of $\tilde{S}$ using a threshold based algorithm, which is for the ease of analysis. For the numerical experiments, we use $K$-means clustering algorithm instead, which is more robust. We initialize the centers for $K$-means clustering as follows. First, randomly pick a row as a center. Then pick the row whose minimum distance from existing centers is maximized and add it to the centers. Continue this process until we have picked $r$ centers.
In this experiment, we compare Algorithm~\ref{alg:algorithm_2} with two other clustering algorithms:
\begin{enumerate}
  \item the standard spectral clustering that applies the $K$-means algorithm to cluster the rows of $\tilde{R}$, which is the rank $r$ approximation of $R$, and
  \item the projected $K$-means that applies the $K$-means algorithm to cluster the net-win vectors $S = \frac{1}{\sqrt{m}}RA^\top$.
\end{enumerate}
We have also tried applying the $K$-means algorithm to cluster the rows of $R$. However, as the $K$-means algorithm never even approximately recover the clusters in the parameter regime considered, we do not include its performance in the plot.

We evaluate the algorithms by measuring the fraction of misclustered users. Let $\{C_k\}$ denote the true clusters and $\{\hat{C}_k\}$ denote the clusters generated by some clustering algorithm. For each $k$, we say $\hat{C}_k$ corresponds to true cluster $k'$ if the majority of users in $\hat{C}_k$ are from $C_{k'}$, and we count any user who is from a different true cluster as an error. Then the fraction of misclustered users is defined as the total number of errors divided by the total number of users.

Fix $m = n = 1200$ and let $b = 4$ or $10$. Figure~\ref{fig:clustering} shows the performance of these three algorithms. The $x$-axis is the number of clusters $r$ and $y$-axis is the average number of comparisons $N=(1-\epsilon)\binom{m}{2}$ provided by each user.
Each point on a curve shows, for the given number of clusters $r$, the smallest $N$ such that the average fraction of misclustered users of an algorithm over $50$ experiments is less than $5\%$, in which case we say the algorithm succeeds.
In other words, the algorithms succeed in the parameter regime above the corresponding curves.

Compared to the algorithms using net-win vectors, the standard spectral clustering algorithm has very poor performance. As we explained in Section \ref{sec:denoising}, the underlying reason is that the deviation $\| R_u - \E{R_u} \|_2 = \Theta\left( m \sqrt{1-\epsilon} \right)$, which is much larger than the distances between different clusters given by $\Theta\left( (1-\epsilon) m \right)$; the net-win vectors are much less noisy with $||{S}_u-\E{S_u}||_2= O\left( \sqrt{(1-\epsilon) m\log n} \right)$.
Furthermore, our Algorithm~\ref{alg:algorithm_2} performs better than the projected K-means. As we explained in Section \ref{sec:denoising}, this is due to the fact that the spectral clustering step in our algorithm further de-noises the net-win vectors by projecting them onto the space spanned by the top $r$ right singular vectors of $S$. Notice that the case $b=10$ is not covered by our theorems, but Figure~\ref{fig:clustering} shows that the clustering algorithms have similar and even better performance in this case.

%
%

\subsection{Estimating the number of clusters $r$}

In practice, the number of user clusters $r$ is usually not known \emph{a priori}. One way to get around this difficulty is to first guess the number of clusters $\tilde{r}$ and then apply our algorithm. In the experiment, we first clusters the rows of $\tilde{S}$ using the $K$-means algorithm for each $\tilde{r}$ and then apply the maximum likelihood estimation for the score vector in each cluster.

We fix $m = n = 120$, $b = 5$ and $\epsilon = 0.95$. Figure~\ref{fig:score_estimation} shows the simulation results for $r = 1, 2, 4$ and $8$.
For each $r$, the blue curve shows how the relative error $\frac{||\hat{\theta}-\theta||_2}{||\theta||_2}$ changes with $\tilde{r}$. When $\tilde{r}$ is smaller than $r$, two or more true clusters are assigned to one cluster and the error in $\hat{\theta}$ is large.
When $\tilde{r}$ is equal or slightly larger than $r$, the estimation $\hat{\theta}$ approximate $\theta$ quite well as each cluster returned by our clustering algorithm is mainly consisted of users from one true cluster. In particular, in the first plot where there is only one cluster, the relative estimation error does not grow much even for $\tilde{r} = 6$. However, when $\tilde{r}$ is too large, there will be many small clusters and the variance in $\hat{\theta}$ can be very large, which also could result large estimation error.

If we view $\hat{\theta}$ as a function of $\tilde{r}$, the red curve shows how the change of $\hat{\theta}$ in $\tilde{r}$, i.e., $||\hat{\theta}(\tilde{r})-\hat{\theta}(\tilde{r}-1)||_2$, changes with $\tilde{r}$. For comparison purpose, we normalize this difference by $||\theta||_2$. From the experiment, a good heuristic for identifying the number of clusters $r$ is by looking for the $\tilde{r}$ such that the change $||\hat{\theta}(\tilde{r})-\hat{\theta}(\tilde{r}-1)||_2$ is minimized.

%

\section{Conclusions}\label{sec:conclusion}
This paper studies the problem of clustering and ranking items with pairwise comparisons obtained from multiple types of users. 
The key idea is that projecting the comparison vectors onto a particular low dimensional linear subspace  significantly reduces the noise and improves the clustering performance; the projection can be efficiently computed by calculating the net-win vectors for each user. 
Our proofs require $b\in [b_0, 5]$ to show the projection preserves the distances between clusters, while
the experiments indicate that the means of the net-win vectors $\bar{S}_k$'s for different clusters are well separated for any $b\geq b_0$. An interesting future work is
to prove that this result indeed holds for a wide range of $b$. 
Also, under a deterministic sampling model where the set of observed entries of the comparison matrix $R$ is fixed,  
the net-win vectors are known to be the sufficient statistics for estimating the score vector
under the BT model if the clusters are known; it is interesting to prove similar results when the clusters are unknown.  


\bibliographystyle{abbrv}
\bibliography{./BT_cluster,./ranking,./BibCDRecommender}

\appendix
\section{Proof of Lemma~6}

Note that $AA^\top = m I - J \triangleq L_m$, which is the Laplacian of the complete graph.  It is easy to check that
 the eigenvalues of $L_m$ are $0, m$, and the eigenvector corresponding to the zero eigenvalue is given by $\frac{1}{\sqrt{m}} \mathbf{1}$.
 Therefore, $A$ is of rank $m-1$ and all its nonzero singular values are $\sqrt{m}$. Moreover, the SVD of $A$ is $A = \sqrt{m}UV^\top$ and
 $[U, \frac{1}{\sqrt{m}} \mathbf{1}]$ is an orthogonal matrix.  Let $U_i$ be the $i$-th row of $U$ and then $\|U_i\|_2 = \sqrt{(m-1)/m}$
 for $i=1, \ldots, m-1$. For $1 \le i < j \le m$, let $V_{ij}$ denote the $(i,j)$-th row of $V$ and $(A^\top U)_{ij}$ denote the $(i,j)$-th
 row of $A^\top U$. Since $V= A^\top U/ \sqrt{m}$, it follows that $V_{ij}=\frac{1}{\sqrt{m}} (U_i-U_j)$ and
 \begin{align*}
 ||V_{ij}||_2 &= \frac{1}{\sqrt{m}} ||U_i-U_j||_2 = \frac{1}{\sqrt{m}} \big|\big| [U_i, \frac{1}{\sqrt{m } }]-[U_j, \frac{1}{\sqrt{m}} ]\big|\big|_2 \\
 &= \sqrt{\frac{2}{m}}.
\end{align*}
\section{Proof of Lemma~7}
  By definition,
  $
    \bar{R}^{(1)}_{u, ij} = \frac{1-\epsilon}{2}f(\theta_{u, i}-\theta_{u, j}).
  $
  The function $f(x)$ is nonlinear and can be approximated by the linear function $x/2$ when $x$ is close to $0$.
  Since $| \theta_{ki}- \theta_{k j} | \le b$ for all $i,j$, the maximum approximation error is given by
  $
    \delta(b) \triangleq |f(b)-\frac{b}{2}|.
  $

	By definition, for any $k$,
    \begin{align*}
      \bar{S}_k = & \frac{1-\epsilon}{2}f(\theta_k^\top A)VU^\top\\
      = & \frac{1-\epsilon}{2}\frac{1}{2}\theta_k^\top \sqrt{m}UU^\top+\frac{1-\epsilon}{2}(f(\theta_k^\top A)-\frac{1}{2}\theta_k^\top A)VU^\top.
    \end{align*}
  Then, the difference between $\bar{S}_k$ and $\bar{S}_{k'}$ is lower bounded by
  \begin{align*}
    &||\bar{S}_k-\bar{S}_{k'}||_2\\
    \geq& \frac{1-\epsilon}{4}\sqrt{m}||(\theta_k-\theta_{k'})U||_2-\frac{1-\epsilon}{2} \Big[||(f(\theta_k^\top A)-\frac{1}{2}\theta_k^\top A)V||_2 \\&+ ||(f(\theta_{k'}^\top A)+ \frac{1}{2}\theta_{k'}^\top A)V||_2 \Big].
  \end{align*}
  As $\sum_i\theta_{k, i} = \sum_i\theta_{k', i} = 0$,
  \begin{align*}
    ||(\theta_k-\theta_{k'})U||_2 = ||(\theta_k-\theta_{k'})[U, \frac{1}{\sqrt{m}}\mathbf{1}]||_2 = ||\theta_k-\theta_{k'}||_2.
  \end{align*}
  Using the fact that
  \begin{align*}
    |f((\theta_i-\theta_j)-\frac{1}{2}(\theta_i-\theta_j)|\leq \frac{\delta(b)}{b}|\theta_i-\theta_j|,
  \end{align*}
  we get
  \begin{align*}
    ||f(\theta_k^\top A)-\frac{1}{2}\theta_k^\top A||_2\leq &\frac{\delta(b)}{b}\sqrt{\sum_{i<j}(\theta_{k, i}-\theta_{k, j})^2}\\
    = &\frac{\delta(b)}{b}\sqrt{m||\theta_k||_2^2}.
  \end{align*}
  Therefore,
  \begin{align*}
    ||\bar{S}_k-\bar{S}_{k'}||_2 \geq &(1-\epsilon) \sqrt{m} \Big[\frac{1}{2}||\theta_k-\theta_{k'}||_2\\&-\frac{\delta(b)}{b}(||\theta_k||_2+||\theta_{k'}||_2)\Big].
  \end{align*}
  First we bound $||\theta_k-\theta_{k'}||_2$. Recall that $\theta_k$ is the centered version of $\theta_k^0$, and $\theta_{k, i}^0$ are generated i.i.d. uniformly in $[0, b]$. When $m>C_1\log r$, by Hoeffding's inequality,
  $$ \bigg|\sum_i\theta_{k, i}^0-\frac{mb}{2}\bigg|\leq C_2\sqrt{m\log r}, \quad \bigg|\sum_i\theta_{k', i}^0-\frac{mb}{2}\bigg|\leq C_2\sqrt{m\log r}$$
  with high probability. By definition,
  \begin{align*}
    ||\theta_k-\theta_{k'}||_2\geq & ||\theta_{k}^0-\theta_{k'}^0||_2-||\frac{1}{m}(\sum_i\theta_{k, i}^0-\sum_i\theta_{k', i}^0)\mathbf{1}||_2\\
    \geq & ||\theta_{k}^0-\theta_{k'}^0||_2-2C_2\sqrt{\log r}.
  \end{align*}
  To bound $||\theta_{k}^0-\theta_{k'}^0||_2$, note that $$\E{||\theta_{k}^0-\theta_{k'}^0||_2^2} = \E{\sum_i(\theta_{k, i}^0-\theta_{k', i}^0)^2} = \frac{mb^2}{6}.$$ Define $X_i = (\theta_{k, i}^0-\theta_{k', i}^0)^2-\frac{b^2}{6}$. Then $|X_i|\leq b^2, \E{X_i} = 0$ and
  \begin{align*}
    \E{X_i^2} = &\E{ (\theta_{k, i}^0-\theta_{k', i}^0)^4}-\frac{b^4}{36}\\
    = &\E{ \left(\theta_{k, i}^0-\frac{b}{2} \right)^4}+6\E{ \left(\theta_{k, i}^0-\frac{b}{2} \right)^2 \left(\theta_{k', i}^0-\frac{b}{2} \right)^2} \\&+\E{ \left(\theta_{k', i}^0-\frac{b}{2} \right)^4}-\frac{b^4}{36}\\
    \leq & \frac{b^4}{80}+\frac{b^4}{6}+\frac{b^4}{80}-\frac{b^4}{36}\leq \frac{b^4}{6}.
  \end{align*}
  By Bernstein's inequality, when $m\geq C_3\log r$, with high probability,
  \begin{align*}
    \left|||\theta_{k}^0-\theta_{k'}^0||_2^2-\frac{mb^2}{6}\right|\leq C_4\sqrt{m\log r}b^2.
  \end{align*}
  When $m$ is large enough, we have $||\theta_k-\theta_{k'}||_2\geq \sqrt{0.9mb^2/6}$ with high probability.

  Next we bound $||\theta_k||_2$. By definition, $\| \theta_k\|_2 \le \| \theta_k^0 \|_2$. Note that $\E{\| \theta_k^0 \|_2^2}= mb^2/12$.
  Following the similar argument as above, we can show that, when $m\geq C_5\log r$, $\| \theta_k\|_2 \le ||\theta_k^0||_2\leq \sqrt{1.1mb^2/12}$ with high probability.

  Combining the two inequalities, we get
  \begin{align*}
    ||\bar{S}_k-\bar{S}_{k'}||_2 \geq& \frac{1}{2}\left(\frac{1}{2}\sqrt{\frac{0.9}{6}}-\sqrt{\frac{1.1}{3}}\frac{\delta(b)}{b} \right)b(1-\epsilon)m \\ \ge&  C(1-\epsilon)m,
  \end{align*}
  where the last inequality holds because $b\in [b_0, 5]$, $\delta(b)/b$ increases with $b$ and $\frac{1}{2}\sqrt{\frac{0.9}{6}} >\sqrt{\frac{1.1}{3}}\frac{\delta(5)}{5}$.

\section{Proof of Lemma~8}
  The assumption that $b$ is large implies that $|\theta_{k, i}-\theta_{k, j}|$ is large for any $k$ and $i<j$. To show this, we note that
  \begin{align*}
    \p{|\theta_{k, i}-\theta_{k, j}|<1}\leq \frac{2}{b}.
  \end{align*}
  Then by union bound we get
  \begin{align}
    \p{\forall k, i<j, |\theta_{k, i}-\theta_{k, j}|\geq 1} \geq 1-\frac{m^3}{b}\geq 1-\frac{C''}{\log m}. \label{equ:theta_sep}
  \end{align}
  In the following we will assume $\theta_{k, i}\ne \theta_{k, j}$ for $i\ne j$. By definition, $\bar{S}_k = \frac{1-\epsilon}{2}f(\theta_k A)VU^\top$. Define $\eta_{k, ij} = \1{\theta_{k, i}>\theta_{k, j}}-\1{\theta_{k, i}<\theta_{k, j}}$ to be the signed indicator variable of the order between $\theta_{k, i}$ and $\theta_{k, j}$. Then
  \begin{align*}
    ||\bar{S}_k-\bar{S}_{k'}|| = &\frac{1-\epsilon}{2}||(f(\theta_k A)-f(\theta_{k'}A))V||_2\\
    \geq & \frac{1-\epsilon}{2}\Big[||(\eta_k-\eta_{k'})V||_2-||(f(\theta_kA)-\eta_k)V||_2\\&-||(f(\theta_{k'}A)-\eta_{k'})V||_2\Big]\\
    \geq & \frac{1-\epsilon}{2}\Big[||(\eta_k-\eta_{k'})V||_2-||f(\theta_kA)-\eta_k||_2 \\ &-||f(\theta_{k'}A)-\eta_{k'}||_2\Big].
  \end{align*}
  First we show that $||f(\theta_kA)-\eta_k||_2\leq C_1\sqrt{m}$. When $|\theta_{k, i}-\theta_{k, j}|\geq t$,
  \begin{align*}
    |f(\theta_{k, i}-\theta_{k, j})-\eta_{k, ij}|\leq \frac{2}{e^t+1}\leq 2e^{-t}.
  \end{align*}
  According to \eqref{equ:theta_sep}, for any integer $1\leq t\leq m$, there are $m-t$ pairs of $\theta_{k, i}$ and $\theta_{k, i}$ separated at least by $t$. Therefore,
  \begin{align*}
    ||f(\theta_kA)-\eta_k||_2^2\leq \sum_{t = 1}^m (m-t)4e^{-2t}\leq C_1m.
  \end{align*}
  We bound $||f(\theta_{k'}A)-\eta_{k'}||_2$ similarly.

  Next we show that $||(\eta_k-\eta_{k'})V||_2\geq C_2m$. Observe that
  \begin{align}
    ||(\eta_k-\eta_{k'})V||_2^2 = & ||\eta_kV||_2^2+||\eta_{k'}V||_2^2-2\eta_kVV^\top \eta_{k'}^\top \notag\\
    = & \frac{2}{3}(m^2-1)-\frac{2}{m}\eta_kA^\top A \eta_{k'}^\top, \label{equ:eta_seq}
  \end{align}
  where the second equality follows from Lemma~\ref{result:large_b_small_angle}. By definition of $A$, $(\eta_kA^\top)_i$ represents the number of $\theta_j$ that are smaller than $\theta_i$ minus the number of $\theta_j$ that are larger than $\theta_i$. Therefore, $\eta_kA^\top$ and $\eta_{k'}A^\top$ are independent random permutations of the deterministic vector $[-(m-1), -(m-3), \dots, m-3, m-1]$. Without loss of generality, assume $\eta_kA^\top = [-(m-1), -(m-3), \dots, m-3, m-1]$ and denote $\eta_{k'}A^\top$ by $x$ which is a random permutation of $\eta_kA^\top$. Let $Z = \eta_kA^\top A \eta_{k'}^\top = -(m-1)x_1+\dots+(m-1)x_m$ and define the martingale $y_i = \E{Z|x_1, \dots, x_i}$. In particular, we have $y_0 = \E{Z} = 0$ and $y_m = Z$. We also note that $|y_{i+1}-y_i|\leq 2m^2$. By Azuma's inequality,
  \begin{align*}
    \p{|Z|\geq t} = \p{|y_m-y_0|\geq t}\leq 2e^{-\frac{t^2}{8m^5}},
  \end{align*}
  and thus $|Z|\leq C_3m^{5/2}\log^{1/2}m$ with high probability. Plugging it into \eqref{equ:eta_seq}, we get $||(\eta_k-\eta_{k'})V||_2\geq C_2m$.

  Combining the above two steps, we conclude that
  \begin{align*}
    ||\bar{S}_k-\bar{S}_{k'}|| \geq C(1-\epsilon)m.
  \end{align*} 

\end{document}